\DeclareMathOperator*{\argmin}{argmin}
\newcommand{\Sp}[1]{\left(#1\right)}
\newcommand{\Mp}[1]{\left[#1\right]}
\newcommand{\Bp}[1]{\left\{#1\right\}}
\newcommand{\Norm}[1]{\left\|#1\right\|}
\newcommand{\inner}[1]{\left\langle#1\right\rangle}
\newcommand{\A}{\mathcal{A}}
\newcommand{\B}{\mathbb{B}}
\newcommand{\D}{\mathcal{D}}
\newcommand{\E}{\mathbb{E}}
\renewcommand{\P}{\mathbb{P}}
\renewcommand{\S}{\mathcal{S}}
\renewcommand{\a}{\mathbf{a}}
\newcommand{\R}{\mathbb{R}}
\renewcommand{\r}{\mathbf{r}}
\newcommand{\M}{\mathcal{M}}
\newcommand{\RNum}[1]{\uppercase\expandafter{\romannumeral #1\relax}}
\newtheorem{assumption}{Assumption}
\newtheorem{theorem}{Theorem}
\newtheorem{definition}{Definition}
\newtheorem{lemma}{Lemma}
\title{
Preference-Based Multi-Agent Reinforcement Learning: Data Coverage and Algorithmic Techniques
}
\author{
  Natalia Zhang\thanks{
  Tsinghua University, \texttt{zsxn21@mails.tinghua.edu.cn}.
  University of Washington, \texttt{wxqkaxdd@uw.edu}.
  University of Washington, \texttt{qwcui@cs.washington.edu}.
  These authors contributed equally to this work. The work was done when Natalia Zhang was visiting the University of Washington.
  }
  \And Xinqi Wang\footnotemark[1]
  \And Qiwen Cui\footnotemark[1]
  \And Runlong Zhou\thanks{University of Washington, \texttt{vectorzh@cs.washington.edu}.}
  \And Sham M.~Kakade\thanks{Harvard University, \texttt{sham@seas.harvard.edu}.}
  \And Simon S.~Du\thanks{University of Washington, \texttt{ssdu@cs.washington.edu}.}
}
\begin{document}
\maketitle

\begin{abstract}
We initiate the study of Preference-Based Multi-Agent Reinforcement Learning (PbMARL),
exploring both theoretical foundations and empirical validations. We define the task as identifying the Nash equilibrium from a preference-only offline dataset in general-sum games, a problem marked by the challenge of sparse feedback signals. Our theory establishes the upper complexity bounds for Nash Equilibrium in effective PbMARL, demonstrating that single-policy coverage is inadequate and highlighting the importance of unilateral dataset coverage. These theoretical insights are verified through comprehensive experiments.
To enhance the practical performance, we further introduce two algorithmic techniques. 
(1) We propose a Mean Squared Error (MSE) regularization along the time axis to achieve a more uniform reward distribution and improve reward learning outcomes. (2) We propose an additional penalty based on the distribution of the dataset to incorporate pessimism, improving stability and effectiveness during training.
Our findings underscore the multifaceted approach required for PbMARL, paving the way for effective preference-based multi-agent systems.
\end{abstract}

\keywords{multi-agent reinforcement learning \and reinforcement learning from human feedback \and dataset coverage}

\section{Introduction}

 Large language models (LLMs) have achieved significant progress in natural language interaction, knowledge acquisition, instruction following, planning and reasoning, which has been recognized as the sparks for AGI \citep{bubeck2023sparks}. The evolution of LLMs fosters the field of agent systems, wherein LLMs act as the central intelligence \citep{xi2023rise}. In these systems, multiple LLMs can interact with each other as well as with external tools. For instance, MetaGPT assigns LLM agents various roles, akin to those in a technology company, enabling them to cooperate on complex software engineering tasks \citep{hong2023metagpt}. 
 
 Despite some empirical successes in agent systems utilizing closed-source LLMs, finetuning these systems and aligning them with human preferences remains a challenge. Reinforcement learning from human feedback (RLHF) has played an important role in aligning LLMs with human preferences \citep{christiano2017deep,ziegler2019fine}. However, unexpected behavior can arise when multiple LLMs interact with each other. In addition, reward design has been a hard problem in multi-agent reinforcement learning \citep{devlin2011empirical}. Thus, it is crucial to further align the multi-agent system from preference feedback.

We address this problem through both theoretical analysis and empirical experiments. 
Theoretically, we characterize the dataset coverage condition for PbMARL that enables learning the Nash equilibrium, which serves as a favorable policy for each player. 
Empirically, we validate our theoretical insights through comprehensive experiments utilizing the proposed algorithmic techniques.

\subsection{Contributions and Technical Novelties}

\paragraph{1. Necessary and Sufficient Dataset Coverage Condition for PbMARL.} In single-agent RLHF, \citep{zhu2023principled} demonstrated that single policy coverage is sufficient for learning the optimal policy. However, we prove that this condition no longer holds for PbMARL by providing a counterexample. Instead, we introduce an algorithm that operates under unilateral coverage, a condition derived from offline MARL \citep{cui2022offline, zhong2022pessimistic}. Specifically, this condition requires the dataset to cover all unilateral deviations from a Nash equilibrium policy. For further details, see Section~\ref{section:theory}.

\begin{figure}
    \centering
    \includegraphics[width=.5\textwidth]{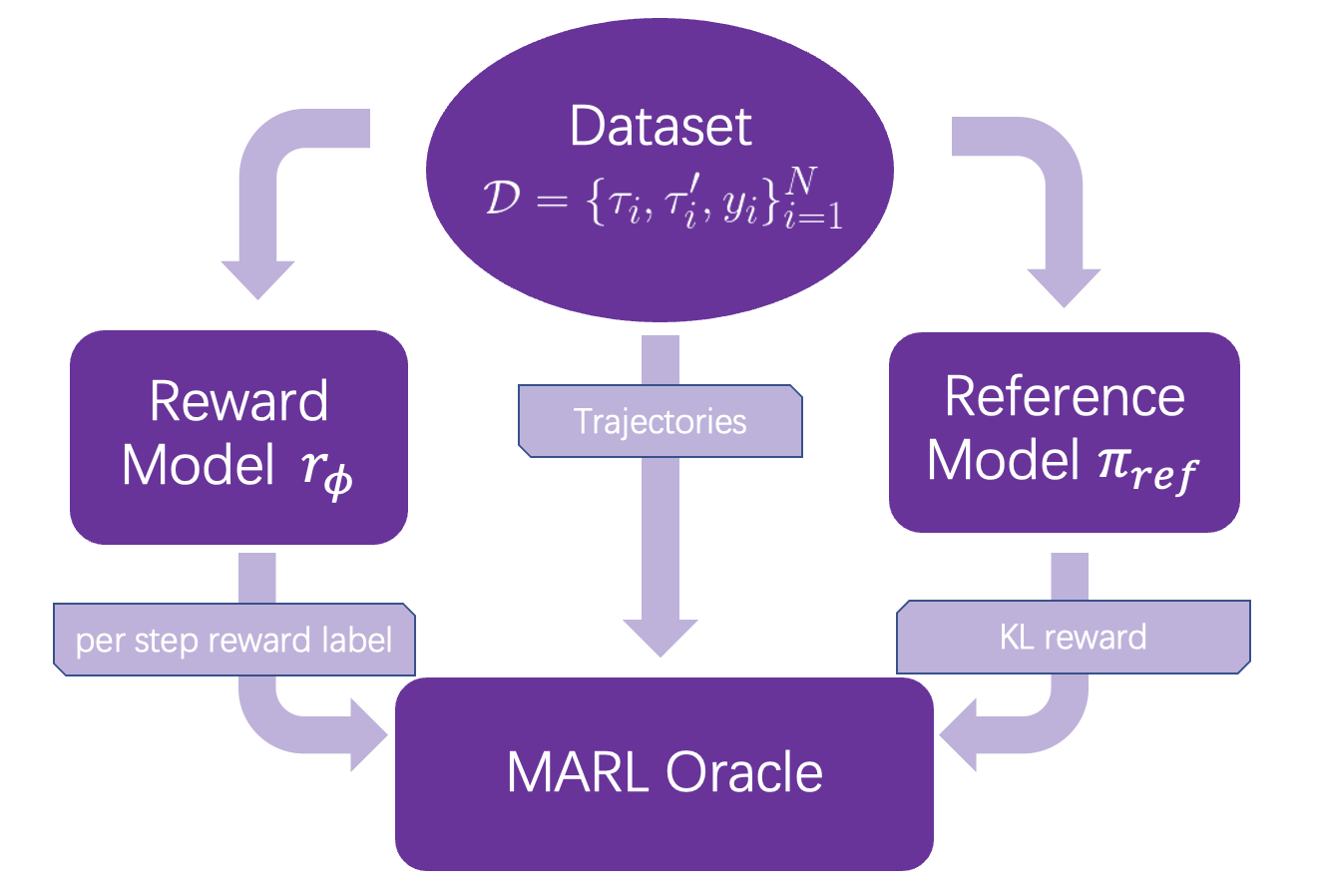}
    \caption{The overall pipeline of offline PbMARL. $\mathcal{D}$ is the preference dataset where $\tau_i, \tau_i'$ are trajectories and $\mathbf{y}_i \in \{1,-1\}^m$ indicates which trajectory is preferred by each agent. $r_\phi$ is the learned reward. $\pi_{b}$ is the learned reference policy using imitation learning.}
    \label{fig: pipline}
\end{figure}

\paragraph{2. Algorithmic Techniques for Practical Performance.}
As a foundational exploration into PbMARL research, we focus on employing the simplest learning framework, incorporating only the essential techniques necessary to ensure the approach's feasibility. The framework consists of three key components: 1) leveraging the preference dataset to learn a reward function, 2) mitigating extrapolation errors with pessimism, and 3) determining the final policy. Figure~\ref{fig: pipline} provides an overview of the process.

However, additional algorithmic techniques are required to identify a robust policy, even when the dataset demonstrates good coverage according to our theoretical insights.

\begin{itemize} 
    \item \textbf{Reward regularization.}  
    We observed that the reward learned through standard Maximum Likelihood Estimation (MLE) is sparse and spiky, making it difficult for standard RL algorithms to utilize effectively (cf. Figure \ref{fig:rewardlines} (b2)). 
    To address this, we introduce an additional Mean Squared Error (MSE) loss between the predictions of adjacent time steps as a form of regularization. This regularization helps to prevent the model from accumulating reward signals solely at the final time step or relying on reward-irrelevant observation patterns, which could otherwise result in the complete failure in producing meaningful predictions.
    

    \item \textbf{Dataset Distribution-Based Pessimism.}
    To mitigate the extrapolation error in offline RL, we add an extra reward term based on the density of a certain state-action pair in the dataset to implement pessimism. In our approach, an imitation learning agent is trained to model the density function.
    The final policy is then trained using a DQN-based Value Decomposition Network (VDN) \citep{mnih2013playing, sunehag2017valuedecomposition}. 
    Our ablation study demonstrates the critical role of appropriately tuning the reward coefficient to ensure training stability and performance (see Table \ref{tab: tech-mpe}).
\end{itemize}

\paragraph{3. Experiment Results.}
Our experiments, following the pipeline described above, confirm the theoretical necessity of unilateral coverage. 
We performed extensive ablation studies across three Multi-Agent Particle Environment (MPE) scenarios—Spread-v3, Tag-v3, and Reference-v3 \citep{mordatch2017emergence}—as well as the popular Overcooked environment \citep{carroll2020utilitylearninghumanshumanai}. These studies focused on the hyperparameter selection for the reward regularization coefficient $\alpha$, pessimism coefficient $\beta$, and dataset diversity. 
The empirical results (Table \ref{tab:exp-mpe}) demonstrate that: 1) augmenting expert demonstrations with trivial trajectories significantly improves performance, 2) unilateral datasets are advantageous, and 3) dataset diversity contributes to lower variance.
Our ablation experiments underscore the effectiveness of the proposed algorithmic techniques.
Additionally, we introduced a principled standardization technique that efficiently tunes hyperparameters across all environments and datasets.


\section{Related Works}

\paragraph{Reinforcement Learning from Human Feedback (RLHF).}

RLHF, or preference-based RL (PbRL), plays a pivotal role in alignment with various tasks such as video games \citep{warnell2018deep,brown2019extrapolating}, robotics \citep{jain2013learning,kupcsik2016learning,christiano2023deep,shin2023benchmarks}, image augmentation \citep{metcalf2024sampleefficient}, and large language models \citep{ziegler2020finetuning,wu2021recursively,nakano2022webgpt,menick2022teaching,stiennon2022learning,bai2022training,glaese2022improving,ganguli2022red,ouyang2022training}.
Additionally, a body of work focuses on the reward models behind preference data \citep{Sadigh2017ActivePL,bıyık2018batch,gao2022scaling,hejna2023inverse}.
Recent works like VIPO \citep{cen2024valueincentivizedpreferenceoptimizationunified} incorporates uncertainty-aware regularization into the reward model, while \citep{liu2024provablymitigatingoveroptimizationrlhf} address over-optimization using adversarial regularization.
Direct preference optimization (DPO, \cite{rafailov2023direct}) and its variants \citep{azar2023general,rafailov2024r} approach RLHF without directly handling the reward model.
Theoretical studies have also explored guarantees, such as sample complexity and regret, and the limitations of certain RLHF algorithms \citep{novoseller2020dueling,xu2020preferencebased,pacchiano2023dueling,chen2022humanintheloop,razin2023vanishing,zhu2024principled,wang2023rlhf,xiong2024iterative,zhu2024iterative}.

\paragraph{Offline Reinforcement Learning.}
Offline RL \citep{Lange2012,levine2020offline} has achieved success in a wide range of real-world applications, including robotics \citep{pinto2015supersizing,levine2016learning,chebotar2021actionable,kumar2023pretraining}, healthcare \citep{raghu2017deep,wang2018supervised}, and autonomous driving \citep{shi2021offline,lee2024ad4rl}.
Key algorithms such as Behavior Cloning, BRAC \citep{wu2019behavior}, BEAR \citep{kumar2019stabilizing}, and CQL \citep{kumar2020conservative,lyu2024mildly} have driven these successes.
Theoretical research on offline RL has primarily focused on sample complexity under various dataset coverage assumptions \cite{le2019batch,chen2019informationtheoretic,yin2020nearoptimal,rashidinejad2023bridging,yin2021nearoptimal,yin2022nearoptimal,shi2022pessimistic,nguyentang2022sample,xie2022policy,xiong2023nearly,li2024settling,xie2023bellmanconsistentpessimismofflinereinforcement,mete2021rewardbiasedmaximumlikelihood}.

\paragraph{Multi-Agent Reinforcement Learning (MARL).}
Many real-world scenarios are naturally modeled as multi-agent environments, whether cooperative or competitive. 
As a result, MARL has gained popularity in video games  \citep{tian2017elf,vinyals2017starcraft,Silver2017MasteringTG,Vinyals2019GrandmasterLI}, network design \citep{shamsoshoara2018distributed,9109331}, energy sharing \citep{Prasad2018MultiagentDR}, and autonomous driving \citep{palanisamy2019multiagent,8638814,Zhou_2022}.
Prominent algorithms in MARL include IQL \citep{Tan2003MultiAR}, MADDPG \citep{lowe2020multiagent}, COMA \citep{foerster2017counterfactual}, MAPPO \citep{yu2022surprising}, VDN \citep{sunehag2017valuedecomposition}, and QMIX \citep{rashid2018qmix}. Theoretical research has made great process in reducing the sample complexity \citep{pmlr-v195-wang23b, xiong2023sampleefficientmultiagentrloptimization}.

\paragraph{Offline MARL.}
Offline MARL is a practical solution for handling sophisticated multi-agent environments. 
Empirically, to address issues related to out-of-distribution actions and complex reward functions, previous works have developed algorithms such as MABCQ \citep{jiang2023offline}, ICQ-MA \citep{yang2021believe}, OMAR \citep{pan2022plan}, and OMIGA \citep{wang2023offline}, which incorporate regularization or constraints on these actions and functions. 
MOMA-PPO \citep{barde2024modelbased} is a model-based approach to offline MARL that generates synthetic interaction data from offline datasets. 
\cite{NEURIPS2022_01d78b29} combines knowledge distillation with multi-agent decision transformers \citep{meng2022offline} for offline MARL.
Theoretical understanding of offline MARL, particularly in the context of Markov games, has been advanced by works that provide sample complexity guarantees for learning equilibria \cite{sidford2019solving,cui2020minimax,zhang2023modelbased,zhang2020finitesample,abe2020offpolicy,cui2022offline,cui2022provably,Zhang2023OfflineLI,Blanchet2023DoublePI,Shi2023ProvablyEO,zhong2022pessimistic}.

\section{Preliminaries}

\textbf{General-sum Markov Games.} We consider an episodic time-inhomogeneous general-sum Markov game $\M$, consisting of $m$ players, a shared state space $\S$, an individual action space $\A_i$ for each player $i\in[m]$ and a joint action space $\A=\A_1\times \A_2\times\cdots \A_m$. The game has a time horizon $H$, an initial state $s_1$, state transition probabilities $\P=(\P_1,\P_2,\cdots,\P_H)$ with $\P_h:\S\A\rightarrow \Delta(\S)$, and rewards $R={R_h(\cdot\mid s_h,a_h)}_{h=1}^H$ where $R_{h,i}\in[0,1]$ represents the random reward for player $i$ at step $h$. 
At each step $h\in[H]$, all players observe current state $s_h$ and simultaneously choose their actions $\a_h=(a_{h,1},a_{h,2},\cdots,a_{h,m})$. The next state $s_{h+1}$ is then sampled from $\P_h(\cdot\mid s_h,\a_h)$, and the reward $r_{h,i}$ for player $i$ is sampled from $R_{h,i}(\cdot\mid s_h,\a_h)$. The game terminates at step $H+1$, with each player aiming to maximize the total collected rewards.

We use $\pi=(\pi_1,\pi_2,\cdots,\pi_m)$ to denote a joint policy, where the individual policy for player $i$ is represented as $\pi_i=(\pi_{1,i},\pi_{2,i},\cdots,\pi_{H,i})$, with each $\pi_{h,i}:S\rightarrow\Delta(A_i)$ defined as the Markov policy for player $i$ at step $h$. 
The state value function and state-action value function for each player $i\in[m]$ are defined as
$$V_{h,i}^{\pi}(s_h):=\E_\pi\Mp{\sum_{t=h}^Hr_{t,i}(s_t,\a_t)\mid s_h},\ Q_{h,i}^{\pi}(s_h):=\E_\pi\Mp{\sum_{t=h}^Hr_{t,i}(s_t,\a_t)\mid s_h,\a_h},$$
where $\E_{\pi}=\E_{s_1,\a_1,\r_1,\cdots,s_{H+1}\sim \pi,\M}$ denotes the expectation over the random trajectory generated by policy $\pi$. The best response value for player $i$ is defined as
$$V_{h,i}^{\dagger, \pi_{-i}}(s_h):=\max_{\pi_i}V_{h,i}^{\pi_i,\pi_{-i}}(s_h),$$
which represents the maximal expected total return for player $i$ given that the other players follow policy $\pi_{-i}$.

A Nash equilibrium is a policy configuration where no player has an incentive to change their policy unilaterally. Formally, we measure how closely a policy approximates a Nash equilibrium using the \textit{Nash-Gap}:
$$\text{Nash-Gap}(\pi):=\sum_{i\in[m]}\Mp{V_{1,i}^{\dagger,\pi_{-i}}(s_1)-V_{1,i}^\pi(s_1)}.$$
By definition, the \text{Nash-Gap} is always non-negative, and it quantifies the potential benefit each player could gain by unilaterally deviating from the current policy. A policy $\pi$ is considered an $\epsilon$-Nash equilibrium \textit{iff} $\text{Nash-Gap}(\pi)\leq\epsilon$.

\textbf{Offline Multi-agent Reinforcement Learning with Preference Feedback.} In offline MARL with Preference Feedback, the algorithm has access to a pre-collected preference dataset generated by an unknown behavior policy interacting with an underlying Markov game. 
We consider two sampled trajectories, $\tau=(s_1,\a_1,s_2,\a_2,\cdots,s_{H+1})$ and $\tau'=(s'_1,\a'_1,s'_2,\a'_2,\cdots,s'_{H+1})$, drawn from distribution $\P(s_1,\a_1,s_2,\cdots,s_{H+1})=\Pi_h\pi^b(\a_h\mid s_h)\P(s_{h+1}\mid s_h,\a_h)$ induced by the behavior policy $\pi^b$. 
In MARLHF, the reward signal is not revealed in the dataset. Instead, each player can observe a binary signal $y_i$ from a Bernoulli distribution following the Bradley-Terry-Luce model \citep{bradley1952rank}:
$$\P(y_i=1\mid \tau,\tau')=\frac{\exp(\sum_{h=1}^Hr_i(s_h,\a_h))}{\exp(\sum_{h=1}^Hr_i(s_h,\a_h))+\exp(\sum_{h=1}^Hr_i(s'_h,\a'_h))},\forall i\in[m].\label{equ: Preference Signal}$$

We make the standard linear Markov game assumption \citep{zhong2022pessimistic}:
\begin{assumption}
$\M$ is a linear Markov game with a feature map $\psi:\S\times\A\rightarrow\R^d$ if we have
    $$\P_h(s_{h+1}\mid s_h,\a_h)=\inner{\psi(s_h,\a_h),\mu_h(s_{h+1})},\forall (s_h,\a_h,s_{h+1},h)\in\S\times\A\times\S\times[H],$$ $$r_i(s_h,\a_h)=\inner{\psi(s_h),\theta_{h,i}},\forall (s_h,\a_h,h,i)\in\S\times\A\times[H]\times[m],$$
    where $\mu_h$ and $\theta_{h,i}$ are unknown parameters. Without loss of generality, we assume $\Norm{\psi(s,\a)}\leq 1$ for all $(s,\a)\in\S\times\A$ and $\Norm{\mu_h(s)}\leq\sqrt{d},\Norm{\theta}_h\leq\sqrt{d}$ for all $h\in[H]$.
\end{assumption}

The one-hot feature map is defined as $\overline{\psi}_h(s,\a):=[0,\cdots,0,\psi(s,\a),0,\cdots,0]\in\R^{Hd}$, where $\psi(s,\a)$ is at position $(h-1)d+1$ to $hd$.


\textbf{Value-Decomposition Network (VDN).} 
In our experiments, we utilize VDN as an offline MARL algorithm for its effectiveness and simplicity. 
VDN~\citep{sunehag2017valuedecomposition} is a Q-learning style MARL architecture for cooperative games. It takes the idea of decomposing the team value function into agent-wise value functions, expressed as: 
$Q_h(s, \a) = \sum_{i=1}^n Q_{h,i}(s, a_i).$
In our experiments, we applied Deep Q-Network (DQN) \citep{mnih2013playing} with VDN to learn the team Q function. We chose DQN to maintain the simplicity and controllability of the experimental pipeline, which facilitates a more accurate investigation of the impact of various techniques on the learning process.

\section{Dataset Coverage Theory for MARLHF}\label{section:theory}
In this section, we study the dataset coverage assumptions for offline MARLHF. 
For offline single-agent RLHF, \cite{zhu2023principled,zhan2023provable} show that single policy coverage is sufficient for learning the optimal policy. However, we prove that this assumption is insufficient in the multi-agent setting by constructing an counterexample. In addition, we prove that unilateral policy coverage is adequate for learning the Nash equilibrium.

\subsection{Policy Coverages}
We quantify the information contained in the dataset using covariance matrices, as the rewards and transition kernels are parameterized by a linear model. With a slight abuse of the notation, for trajectory $\tau=(s_1,\a_1,s_2,\a_2,\cdots,s_{H+1})$, we use $\psi(\tau):=[\psi(s_1,\a_1),\psi(s_2,\a_2),\cdots,\psi(s_H,\a_H)]$ to denote the concatenated trajectory feature. 
The reward coverage is measured by the preference covariance matrix:
$$\Sigma_{\D}^r=\lambda I+\sum_{(\tau,\tau')\in\D}(\psi(\tau)-\psi(\tau'))(\psi(\tau)-\psi(\tau'))^\top,$$
where $\psi(\tau)-\psi(\tau')$ is derived from the preference model. 
Similarly, the transition coverage is measured by the covariance matrix:
$$\Sigma_{\D,h}^{\P}=\lambda I+\sum_{(\tau,\tau')\in\D}\Mp{\psi(s_h,\a_h)\psi(s_h,\a_h)^\top+\psi(s'_h,\a'_h)\psi(s'_h,\a'_h)^\top}.$$
For a given state and action pair $(s_h,\a_h)$, the term $\Norm{\overline{\psi}_h(s_h,\a_h)}_{[\Sigma_{\D}^r]^{-1}}$ measures the uncertainty in reward estimation and $\Norm{\psi(s_h,\a_h)}_{[\Sigma_{\D,h}^{\P}]^{-1}}$ measures the uncertainty in transition estimation.
As a result, the overall uncertainty of a given policy $\pi$ with dataset $\D$ is measured by 
$$U_\D(\pi):=\E_{\pi}\Mp{\sum_{h=1}^H\Norm{\overline{\psi}_h(s_h,a_h)}_{[\Sigma_{\D}^r]^{-1}}+\sum_{h=1}^H\Norm{\psi(s_h,a_h)}_{[\Sigma_{\D,h}^{\P}]^{-1}}}.$$

\begin{definition}
For a Nash equilibrium $\pi^*$, different policy coverages are measured by the following quantities:
\begin{itemize}
    \item Single policy coverage: $U_\D(\pi^*)$.
    \item Unilateral policy coverage: $\max_{i,\pi_i}U_\D(\pi_i,\pi_{-i}^*)$.
    \item Uniform policy coverage: $\max_{\pi}U_\D(\pi)$.
\end{itemize}
Intuitively, small $U_\D(\pi^*)$ indicates that the dataset contains adequate information about $\pi^*$. A small $\max_{i,\pi_i}U_\D(\pi_i,\pi_{-i}^*)$ implies that the dataset covers all of the unilateral deviations of $\pi^*$, and small $\max_{\pi}U_\D(\pi^*)$ suggests that the dataset covers all possible policies.
\end{definition}

\subsection{Single Policy Coverage is Insufficient}
Our objective is to learn a Nash equilibrium policy from the dataset, which necessitates that the dataset sufficiently covers the Nash equilibrium. In the single-agent scenario, if the dataset covers the optimal policy, pessimism-based algorithms can be employed to recover the optimal policy. 
However, previous work \citep{cui2022offline,zhong2022pessimistic} has demonstrated that single policy coverage is insufficient for offline MARL. We extend this result to the context of offline MARL with preference feedback, as follows:
\begin{theorem}
\label{thm: Single Policy Coverage}
    (Informal) If the dataset only has coverage on the Nash equilibrium policy (i.e. small $U_\D(\pi^*)$), it is not sufficient for learning an approximate Nash equilibrium policy.
\end{theorem}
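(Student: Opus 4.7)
The plan is to exhibit an information-theoretic lower bound through a pair of linear Markov games $\M_1,\M_2$ that share a common Nash equilibrium $\pi^*$, induce identical distributions over the observed preference dataset, but whose Nash-Gaps with respect to any algorithm's output policy $\widehat\pi$ differ by a constant. I would work in the simplest regime that still captures the phenomenon: a two-player matrix game with $H=1$, a single state, two actions per player, and a one-hot feature map $\psi$ over the four joint actions. Setting the behavior policy $\pi^b=\pi^*$ (assumed pure) ensures every sampled trajectory is the equilibrium joint action $\a^*$, so each pair $(\tau,\tau')\in\D$ satisfies $\tau=\tau'$, and the Bradley--Terry preference reduces to a fair coin regardless of the off-support reward parameters $\theta_{h,i}(\a)$ for $\a\neq\a^*$. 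In particular, no information about unilateral deviations is leaked into the dataset.

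Next I would design two reward matrices so that $\pi^*$ remains a Nash equilibrium in both $\M_1$ and $\M_2$, while for some player $i$ the best-response value $V_{1,i}^{\dagger,\pi^*_{-i}}$ differs by a constant $c$ between the two games. A concrete recipe with $\a^*=(1,1)$: set $r_i(1,1)=1/2$ in both games; in $\M_1$ set $r_i(2,1)=1/2$ so that deviating yields no gain and $\pi^*$ is an NE; in $\M_2$ set $r_i(2,1)=1/2+c$ while simultaneously adjusting the opponent's payoffs to keep $\pi^*$ an NE of $\M_2$. I would then tune the remaining off-diagonal entries of both players' reward matrices so that (a) $\pi^*$ is a $0$-NE of both games, for instance by making each player's own unilateral deviation weakly dominated; (b) the linear Markov game norm bound $\Norm{\theta_{h,i}}\leq\sqrt{d}$ holds; and (c) $U_\D(\pi^*)$ is small in both games, which follows automatically because the covariance matrices $\Sigma_\D^r$ and $\Sigma_{\D,h}^\P$ concentrate all of their data-dependent mass on $\psi(s,\a^*)$, making $\overline{\psi}(s,\a^*)$ low-uncertainty while the coverage of off-support features is irrelevant for measuring $U_\D(\pi^*)$.

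Because the joint law of the observed preference dataset is identical across $\M_1$ and $\M_2$, any (possibly randomized) algorithm produces the same output distribution $\widehat\pi$ on the two instances. A standard Le Cam two-point argument then yields
\begin{equation*}
\max_{k\in\{1,2\}}\E_{\D\sim\M_k}\!\left[\text{Nash-Gap}_{\M_k}(\widehat\pi)\right]\;\geq\;\Omega(c),
\end{equation*}
which establishes that no algorithm can uniformly achieve small Nash-Gap under only single policy coverage. The main obstacle I anticipate is the joint design of the two payoff matrices: the off-equilibrium entries must be chosen so that $\pi^*$ is simultaneously a true Nash equilibrium in both $\M_1$ and $\M_2$, the linear-parameter norm constraints of the linear Markov game assumption are met, and yet the best-response structure on the unobserved actions differs by $\Omega(1)$. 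This coupling across players and games is where the real construction work lies; once a valid pair is in hand, the indistinguishability argument is routine.
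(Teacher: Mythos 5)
Your high-level strategy---a two-point indistinguishability argument over a pair of $H=1$ matrix games with one-hot features---is the same as the paper's, but your specific construction contains a fatal internal inconsistency. You require simultaneously that $\pi^*$ be an exact Nash equilibrium of both $\M_1$ and $\M_2$ and that the best-response value $V_{1,i}^{\dagger,\pi^*_{-i}}$ differ by a constant $c$ across the two games. These cannot both hold: if $\pi^*$ is a Nash equilibrium of $\M_k$, then $V_{1,i}^{\dagger,\pi^*_{-i}}=V_{1,i}^{\pi^*}=r_i(\a^*)=1/2$ in both games, so the best-response values are forced to agree. Your concrete recipe exhibits exactly this failure: setting $r_i(2,1)=1/2+c$ in $\M_2$ makes player $i$'s unilateral deviation from $\a^*=(1,1)$ strictly profitable, so $\pi^*$ is not a Nash equilibrium of $\M_2$, and no adjustment of the \emph{opponent's} payoffs can repair player $i$'s own incentive constraint. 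Moreover, the flaw is not reparable within your design: if both games share the common equilibrium $\pi^*$ and the dataset consists of trajectories of the pure policy $\pi^*$, then the trivial imitation algorithm that outputs the observed joint action $\a^*$ achieves Nash-Gap zero on both instances, so no Le Cam separation is possible. A valid hard pair must have \emph{different} Nash equilibria, with a behavior policy covering both of them (so that the single-policy-coverage premise holds whichever game is the true one) while the covered entries carry identical rewards (so that the data laws coincide). This is precisely the paper's construction: both diagonal entries equal $0.5$ in both games, $\pi^b$ uniform over $\{(a_1,b_1),(a_2,b_2)\}$, equilibrium at $(a_1,b_1)$ in $\M_1$ and at $(a_2,b_2)$ in $\M_2$, with the off-diagonal entries $0$ and $1$ swapped; any single output policy is then at most a $0.5$-Nash equilibrium in at least one of the two games.

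There is also a quantitative failure in your step (c). With $\pi^b=\pi^*$ pure, every comparison pair satisfies $\tau=\tau'$, hence $\psi(\tau)-\psi(\tau')=0$ identically and $\Sigma_{\D}^r=\lambda I$ accrues no information no matter how large $n$ is. The reward-uncertainty term in $U_\D(\pi^*)$ is then $\Norm{\overline{\psi}_1(s,\a^*)}_{[\Sigma_{\D}^r]^{-1}}=\lambda^{-1/2}$, a constant independent of the sample size, so your dataset does not satisfy the theorem's premise of small $U_\D(\pi^*)$ at all: the statement you would end up proving is not ``single policy coverage is insufficient'' but the vacuous ``a dataset with no reward information is insufficient.'' Your assertion that small coverage ``follows automatically'' conflates the transition covariance $\Sigma_{\D,h}^{\P}$ (which does accumulate mass along $\psi(s,\a^*)$) with the preference covariance $\Sigma_{\D}^r$ (which is built from \emph{difference} features and stays at $\lambda I$ under your pure behavior policy). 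The paper's mixed behavior policy avoids this degeneracy: roughly half the pairs produce nonzero difference features $\pm(e_1-e_4)$, so the dataset genuinely covers the equilibrium actions of both games in the reward metric, while the reward information it provides lies in a direction orthogonal to the off-diagonal entries $e_2,e_3$ on which the two games differ.
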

The proof is derived by a reduction from standard offline MARL to MARLHF. 
Suppose that MARLHF with single policy coverage suffices, we could construct an algorithm for standard offline MARL, which leads to a contradiction. The formal statement and the detailed proof are deferred to Appendix~\ref{apd:single_policy}.

\subsection{Unilateral Policy Coverage is Sufficient}
While single policy coverage is too weak to learn a Nash equilibrium, uniform policy coverage, though sufficient, is often too strong and impractical for many scenarios. Instead, we focus on unilateral policy coverage, which offers a middle ground between single policy coverage and uniform policy coverage.

\begin{theorem}
\label{thm: Unilateral Policy Coverage}
    (Informal) If the dataset has unilateral coverage on the Nash equilibrium policy, there exists an algorithm that can output an approximate Nash equilibrium policy.
\end{theorem}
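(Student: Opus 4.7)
My strategy is to adapt pessimism-based offline MARL (\citet{cui2022offline,zhong2022pessimistic}) to the preference-only setting by plugging in a Bradley-Terry-Luce MLE for the reward and returning the policy that minimizes a pessimistic surrogate of the Nash-Gap. Concretely, the algorithm has four stages: (i) compute $\wt_i=\argmax_\theta \sum_{(\tau,\tau')\in\D}\log\P_\theta(y_i\mid\tau,\tau')$ for each player; (ii) estimate each transition $\hat{\P}_h$ by ridge regression on the feature $\psi$; (iii) for every joint policy $\pi$ and player $i$, build pessimistic values $\uV_{h,i}^\pi$ by backward induction using reward $\hat{r}_i-b_{h,i}$ and optimistic best-response values $\oV_{h,i}^{\dagger,\pi_{-i}}$ using $\hat{r}_i+b_{h,i}$, where
$b_{h,i}(s,\a)=\beta_r\Norm{\overline{\psi}_h(s,\a)}_{[\Sigma_\D^r]^{-1}}+\beta_\P H\Norm{\psi(s,\a)}_{[\Sigma_{\D,h}^\P]^{-1}}$
captures the reward and transition uncertainties; and (iv) output
$\hat{\pi}\in\argmin_\pi\sum_{i\in[m]}[\oV_{1,i}^{\dagger,\pi_{-i}}(s_1)-\uV_{1,i}^\pi(s_1)]$.

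For the concentration pieces, I would follow the single-agent analysis of \citet{zhu2023principled}: strong convexity of the BTL log-likelihood under the assumed feature boundedness yields $\Norm{\wt_i-\theta^*_i}_{\Sigma_\D^r}=\tilde{O}(\sqrt{d})$, which, via Cauchy--Schwarz on the one-hot feature $\overline{\psi}$, gives the elementwise reward confidence $|\hat{r}_{h,i}(s,\a)-r_{h,i}(s,\a)|\leq\beta_r\Norm{\overline{\psi}_h(s,\a)}_{[\Sigma_\D^r]^{-1}}$. A standard self-normalized martingale bound handles the ridge-regression estimator for $\hat{\P}_h$. Telescoping both bounds across the horizon gives the pessimism/optimism sandwich $\uV_{h,i}^\pi\leq V_{h,i}^\pi\leq V_{h,i}^{\dagger,\pi_{-i}}\leq\oV_{h,i}^{\dagger,\pi_{-i}}$ uniformly over $\pi$, and the two-sided gap at any fixed $\pi$ is controlled by $U_\D(\pi)$ up to logarithmic factors.

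The Nash-Gap decomposition then completes the proof. By the sandwich and the $\argmin$ property of $\hat{\pi}$ with comparator $\pi^*$,
\begin{align*}
\mathrm{Nash\text{-}Gap}(\hat{\pi})\leq\sum_i[\oV_{1,i}^{\dagger,\hat{\pi}_{-i}}(s_1)-\uV_{1,i}^{\hat{\pi}}(s_1)]\leq\sum_i[\oV_{1,i}^{\dagger,\pi^*_{-i}}(s_1)-\uV_{1,i}^{\pi^*}(s_1)].
\end{align*}
Because $\pi^*$ is a true Nash, $V_{1,i}^{\dagger,\pi^*_{-i}}=V_{1,i}^{\pi^*}$, so inserting this common value splits the right-hand side into an ``optimism on the best response to $\pi^*_{-i}$'' piece bounded by $U_\D(\pi_i^\dagger,\pi^*_{-i})$ and a ``pessimism on $\pi^*$'' piece bounded by $U_\D(\pi^*)$, whose sum is exactly the unilateral coverage quantity $\max_{i,\pi_i}U_\D(\pi_i,\pi^*_{-i})$. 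Converting this uncertainty-style bound into a sample-complexity statement is then routine.

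I expect the main obstacle to be two-fold. First, obtaining the BTL-MLE concentration in the correct $\Sigma_\D^r$-norm rather than a weaker marginal norm: the likelihood is only sum-of-rewards-identifiable along the trajectory, which is why the one-hot feature $\overline{\psi}$ is essential, and the Hessian lower bound requires care because the BTL sigmoid derivative can be exponentially small and so must be shown not to enter the sample-complexity rate. Second, the optimistic best-response operator $\oV^{\dagger,\pi_{-i}}$ must be well-defined against arbitrary, data-dependent $\pi_{-i}$; this forces the bonus to be a policy-independent function of $(s,\a)$ so that $\max_{\pi_i}$ commutes cleanly with the backward induction and the optimism is preserved against the best response to $\hat{\pi}_{-i}$. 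Computing $\hat{\pi}$ is PPAD-hard in general, but this is orthogonal to the statistical content of the theorem.
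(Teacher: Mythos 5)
Your proposal reproduces the paper's proof essentially step for step: the paper likewise builds a BTL-MLE reward confidence set via Lemma 3.1 of \citet{zhu2023principled}, runs pessimistic value estimation and optimistic best-response estimation with transition bonuses in the linear Markov game (its max/min of $\langle\psi,\theta_h\rangle$ over the confidence region is equivalent, via Cauchy--Schwarz, to your additive bonus $\beta_r\Norm{\overline{\psi}_h(s,\a)}_{[\Sigma_{\D}^r]^{-1}}$), outputs the minimizer of the surrogate $\sum_{i\in[m]}[\oV_{1,i}^{\dagger,\pi_{-i}}(s_1)-\uV_{1,i}^{\pi}(s_1)]$ (its SBSM variant), and bounds the Nash gap at the comparator $\pi^*$ using $V_{1,i}^{\dagger,\pi^*_{-i}}(s_1)=V_{1,i}^{\pi^*}(s_1)$ so that both resulting terms are controlled by the unilateral coverage quantity $\max_{i,\pi_i}U_\D(\pi_i,\pi^*_{-i})$. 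Your closing caveats also align with the paper, which explicitly concedes that this algorithm is not computationally efficient.
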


The detailed proof is deferred to Appendix~\ref{apd:unilateral}. We leverage a variant of Strategy-wise Bonus and Surrogate Minimization (SBSM) algorithm in \citep{cui2022provably} with modified policy evaluation and policy optimization subroutines. 
Intuitively, the algorithm identifies a policy that minimizes a pessimistic estimate of the Nash gap. As a result, if the dataset has unilateral coverage, the output policywill have a small Nash gap and serves as a good approximation of the Nash equilibrium.
\section{Algorithmic Techniques for Practical Performance}
In Section \ref{section:theory}, we provided a theoretical characterization of the dataset requirements for PbMARL. 
However, the algorithm used in Theorem \ref{thm: Unilateral Policy Coverage} is not computationally efficient. In this section, we propose a practical algorithm for PbMARL and validate our theoretical findings through experiments.

\subsection{High-level Methodology}
Our PbMARL pipeline consists of two phases: 
In the first step, we train a reward prediction model $\phi$ and approximate the behavior policy $\pi_b$ using imitation learning; in the second step, we then apply an MARL algorithm to maximize a combination of the KL-divergence-based reward and standardized predicted reward $r_\phi$, ultimately deriving the final policy $\pi_{\mathbf{w}}$. 


\textbf{Step 1: Reward Training and Dataset Modeling.}
Given the preference signals of trajectories, we use neural networks to predict step-wise rewards $r_\phi(s_h, a_h)$ for each agent, minimizing the loss defined in \eqref{equ: Reward Model Target}. 
The objective is to map $(s,a_i)$-pairs to reward values such that the team returns align with the preference signals. 
At the same time, in order to utilize distribution-based penalty term $\log{\pi_{b}(s,a)}$ to cope with the extrapolation error in offline learning, an imitation learner is trained over the entire dataset to model the behavior policy $\pi_b$.


\textbf{Step 2: Offline MARL.}
Although in this work, VDN is chosen as the MARL oracle, it should be noted that other MARL architectures are also applicable. With the reward model $r_\phi$ and the approximated dataset distribution learned in Step 1, we are now able to construct a virtual step-wise reward for each agent. The agents are then trained to maximize the target defined in (\ref{equ: Final Reward Function}).

Given this framework, additional techniques are required to build a strong practical algorithm, which we provide more details below.

\subsection{Reward Regularization}
\label{subsec: Reward regularization}
Compared to step-wise reward signals, preference signals are $H$ times sparser, making them more challenging for a standard RL algorithm to utilize effectively.
Concretely, this reward sparsity causes the naive optimization of the negative log-likelihood (NLL) loss to suffer from two key problems:
\begin{enumerate}
    \item \textbf{Sparse and spiky reward output.} When calculating NLL losses, spreading the reward signal along the trajectories is equivalent to summing it at the last time step (Figure ~\ref{fig:rewardlines_mixed_noMSE}). 
    However, a sparse reward signal is harder for traditional RL methods to handdle due to the lack of continuous supervision. More uniformly distributed rewards across the entire trajectory generally leads to more efficient learning in standard RL algorithms.
    \item \textbf{Over-reliance on irrelevant features.} The model may exploit redundant features as shortcuts to predict rewards. For instance, expert agents in cooperative games usually exhibit a fixed pattern of collaboration from the very beginning of the trajectory (such as specific actions or communication moves). The reward model might use these patterns to differentiate them from agents of other skill levels, thereby failing to capture the true reward-observation causal relationships. 
\end{enumerate}

To mitigate these problems, we introduce an extra Mean Squared Error (MSE) regularization along the time axis (Equation \ref{equ: Reward Model Target}, \ref{equ: Reward Regularization}). By limiting the sudden changes in reward predictions between adjacent time steps, this regularization discourages the reward model from concentrating its predictions on just a few time steps. 
While these issues can also be mitigated by using more diversified datasets and adding regularization to experts to eliminate reward-irrelevant action patterns, these approaches can be costly and sometimes impractical in real-world applications.  
In contrast, our MSE regularization is both easy to implement and has been empirically verified to be effective, creating more uniform reward distribution (Figure \ref{fig:rewardlines}) and better performances. 

\begin{equation}
\label{equ: Reward Model Target}
    L_{\text{RM}}(\phi) = -\mathbb{E}_{\mathcal{D}}\left[\sum_{i=1}^m\log\sigma(y_i(r_{\phi,i}(\tau_1)- r_{\phi,i}(\tau_2)))\right]  + \frac{\alpha}{\text{Var}_{\mathcal{D}}(r_ \phi)} L_{\text{MSE}}(\phi, \tau),
\end{equation}
where the regularization term $L_{\text{MSE}}$ is defined as:
\begin{equation}
\label{equ: Reward Regularization}
    L_{\text{MSE}}(\phi,\tau) = \mathbb{E}_{\mathcal{D}}\left[\sum_{h=1}^{H-1}\|r_\phi(s_h,\a_h) - r_{\phi}(s_{h+1}, \a_{h+1})\|_2^2\right].
\end{equation}
Here $\alpha$ is the regularization coefficient, which is set to be 1 in our experiments. The variance of $r_\phi$ is calculated over the training set to adaptively scale the regularization term. 
During training, $\text{Var}_\mathcal{D}(r_\phi)$ is detached to prevent gradients from flowing through it. The effectiveness of this method is validated in the ablation study (cf. Section \ref{subsec: ablation studies-reward regularization}).
\vspace{-2mm}

\subsection{Dataset Distribution-Based Pessimism}
\label{subsec: Imitation Learning}
There are various methods to mitigate the over-extrapolation errors in offline RL \citep{peng2019advantageweighted, nair2021awac}, including conservative loss over the Q-function \citep{kumar2020conservative} and directly restricting the learned policy actions to those within within the dataset \citep{fujimoto2019offpolicy}. 
We add a per-step dataset-based penalty term, $\log{\pi_b (s, \a)}$, as pessimism towards less explored states.
Imitation learning is utilized to estimate the behavior policy $\pi_{b}$ from the dataset distribution.
To stabilize training, we standardize predicted reward $r_\phi$ over $\mathcal{D}$ before combining it with the penalty term to make them comparable:

\vspace{-2mm}
\begin{equation}
\label{equ: Final Reward Function}
    \text{objective}(\mathbf{w}) = \mathbb{E}_{\tau\sim\pi_\mathbf{w}}\left[\sum_{h=1}^H r_\text{std}(s_h,\a_h, \phi) + \beta \log{\pi_b (s_h, \a_h)}\right],
\end{equation}
where $\beta$ is the pessimism coefficient.
\footnote{$\beta$ is set to be $(1,1,10,10)$ in Spread-v3, Reference-3, Tag-v3 and Overcooked respectively in the main experiments, and the pessimism term $\beta \log{\pi_b (s_h, \a_h)}$ is clipped to (-10, 1) in practice.} 
The standardized reward $r_\text{std}$ is defined as:

\begin{equation}
    r_\text{std}(s_h, \a_h, \phi) = \sum_{i=1}^m\frac{r_{\phi}(s_h,a_{h,i}) - \mathbb{E}_{\mathcal{D}}({r}_{\phi})}{\sqrt{\text{Var}_{\mathcal{D}}(r_\phi)}}.
\end{equation}


Intuitively, the penalty term $ \log{\pi_b (s_h, \a_h)}$ discourages the agents from deviating from the most preferred actions in the dataset. 
The effectiveness of this method is validated in the ablation study (cf. Section \ref{subsec: other discussion}).


\section{Experiments}
\label{sec: Experiments}
\newcommand{\std}[1]{\tiny{$\pm$ #1}}

We design a series of experiments to validate our theories and methods in common general-sum games.
Specifically, we first use online RL algorithms to train expert agents, and take intermediate checkpoints as rookie agents. Then, we use these agents to collect datasets and use the Bradley-Terry model over standardized returns to simulate human preference.
Experiments are carried out to verify the efficiency of our approach with unilateral policy dataset coverage (in Theorem \ref{thm: Unilateral Policy Coverage}) while single policy coverage is insufficient (stated in Theorem \ref{thm: Single Policy Coverage}).
We also design ablation studies to showcase the importance of our methods, particularly focusing on reward regularization and dataset distribution-based pessimism. 


\subsection{Environments}
Our experiments involved three Multi-Agent Particle Environments (MPE), including Spread-v3, Tag-v3 and Reference-v3, and Overcooked environment implemented with JaxMARL codebase \citep{flair2023jaxmarl}. 
\textbf{Spread-v3} contains a group of agents and target landmarks, where the objective is to cover as many landmarks as possible while avoiding collisions. 
\textbf{Tag-v3} contains two opposing groups, where quicker "preys" need to escape from  "predators". To ensure a fair comparison of different predator cooperation policies, we fixed a pretrained prey agent.
\textbf{Reference-v3} involves two agents and three potential landmarks, where the agents need to find each one's target landmark to receive a high reward. The target landmark of each agent is only known by the other agent at first.
\textbf{Overcooked} involves two agents moving and operating objects in a gridworld. 
A more detailed description of the tasks and their associated challenges is provided in Appendix \ref{apd: Tasks Descriptions}.

\subsection{The Importance of Dataset Diversity}
\label{subsec: datasets}
To study the influence of diversity of dataset, we manually designed 4 kinds of mixed joint behavior polices, and change their ratios to form different datasets. 
\begin{itemize}[leftmargin=2em,itemsep=0em,topsep=0em]

    \item Expert policy: $n$ expert agents. Trained with online RL algorithms till convergence.
    \item Rookie policy: $n$ rookie agents. Trained with online RL algorithms with early stop.
    \item Trivial policy: $n$ random agents. All actions are uniformly sampled from the action space.
    \item Unilateral policy: $n-1$ expert agents and $1$ rookie agent of different proficiency level. 
\end{itemize}

Table~\ref{tab: dataset mix ratio} presents the ratio of trajectories collected by the four different policies. The experiments are designed to hierarchically examine the roles of diversity (Diversified vs. Mix-Unilateral), unilateral coverage (Mix-Unilateral vs. Mix-Expert), and trivial comparison (Mix-Expert vs. Pure-Expert).

The ranking of diversity follows the order:
\[
\text{Pure-Expert} < \text{Mix-Expert} < \text{Mix-Unilateral} < \text{Diversified}
\]

Due to the inherent limitations of offline reinforcement learning (RL) in action selection dictated by the dataset, the effectiveness of learning is often strongly correlated with dataset quality, i.e. the level of expertise demonstrated in the dataset. However, the results in preference-based MARL experiments partially diverge from this conventional conclusion. While the quality of the dataset remains critical, experiments on Reference-v3 and Overcooked (Table \ref{tab:exp-mpe}) indicate that diversity and unilateral data can significantly enhance the performance of the reward model, thereby facilitating learning.

The main experimental results are presented in Table~\ref{tab:exp-mpe} and Table~\ref{tab:exp-bcqiql}.Among all the experiments, apart from the experiments on Tag-v3, where the high operational precision requirements make data quality more critical than diversity, the other three environments validate our conclusions across all algorithms.

\begin{table*}[t]
\centering
\begin{tabular}{lrrrr}
\toprule
 & Expert & Unilateral & Rookie & Trivial  \\
\midrule
Diversified & 1 & 1 & 1 &1\\
Mix-Unilateral & 2 &1 & 0 & 1 \\
Mix-Expert & 3 &0 & 0 & 1 \\
Pure-Expert & 4 & 0 & 0 & 0 \\
\bottomrule
\end{tabular}
\caption{{Final datasets mixed with various ratios. The overall dataset size is kept to 38400 trajectories for MPE, and 960 trajectories for Overcooked. (cf. Appendix \ref{apd: dataset}})
}
\label{tab: dataset mix ratio}
\vspace{-3mm}
\end{table*}

\begin{table*}[t]
\centering

\begin{tabular}{l|lllll}
\toprule
Algorithm & Dataset &  Spread-v3 & Tag-v3 & Reference-v3 & Overcooked \\
\midrule
VDN with  & Diversified & -21.16 \std{0.54}  &29.28 \std{1.08}   &-18.89 \std{0.60}   & \textbf{238.89 \std{3.50}}\\
Pessimism Penalty &Mix-Unilateral &-21.03 \std{0.44} &36.65 \std{0.70} & -18.80 \std{0.63} & 221.80 \std{26.66} \\
&Mix-Expert & -20.98 \std{0.54} &35.96 \std{0.86} & -18.80 \std{0.44} & 35.26 \std{55.19} \\
&Pure-Expert & -21.01 \std{0.57}&\textbf{39.55 \std{0.77}} & -28.97 \std{2.89} & 3.36 \std{7.19} \\
\bottomrule
\end{tabular}
\caption{In the simplest environment, Spread-v3, different dataset gives similar performance. In Tag-v3 environment, where precise actions are required, the quality of the dataset (proportion of expert demonstration) is more important than diversity. In contrast, in Overcooked environment, which focuses on strategy learning and demands less on precision, dataset diversity contributes to improved stability, with Unilateral playing a particularly critical role. In the Reference-v3 environment, which balances the need for precision and strategic, the importance of both factors is more balanced, but non-expert data is still necessary.}
\label{tab:exp-mpe}
\vspace{-5mm}
\end{table*}

\begin{figure}[H]
    \centering
    \begin{subfigure}[b]{0.4\textwidth}
        \raisebox{0.28cm}{
        \includegraphics[width=\textwidth]{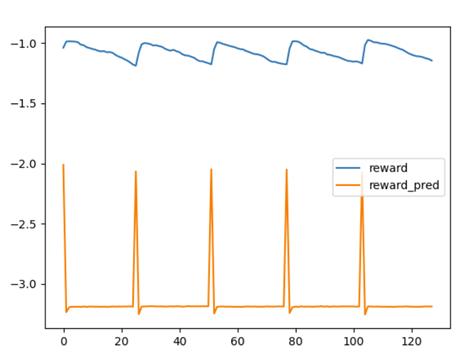}
        }
        \caption{Mix-Expert $\alpha=0$ (spread-v3)}
        \label{fig:rewardlines_mixed_noMSE}
    \end{subfigure}
    \hfill
    \begin{subfigure}[b]{0.55\textwidth}
        \includegraphics[width=\textwidth]{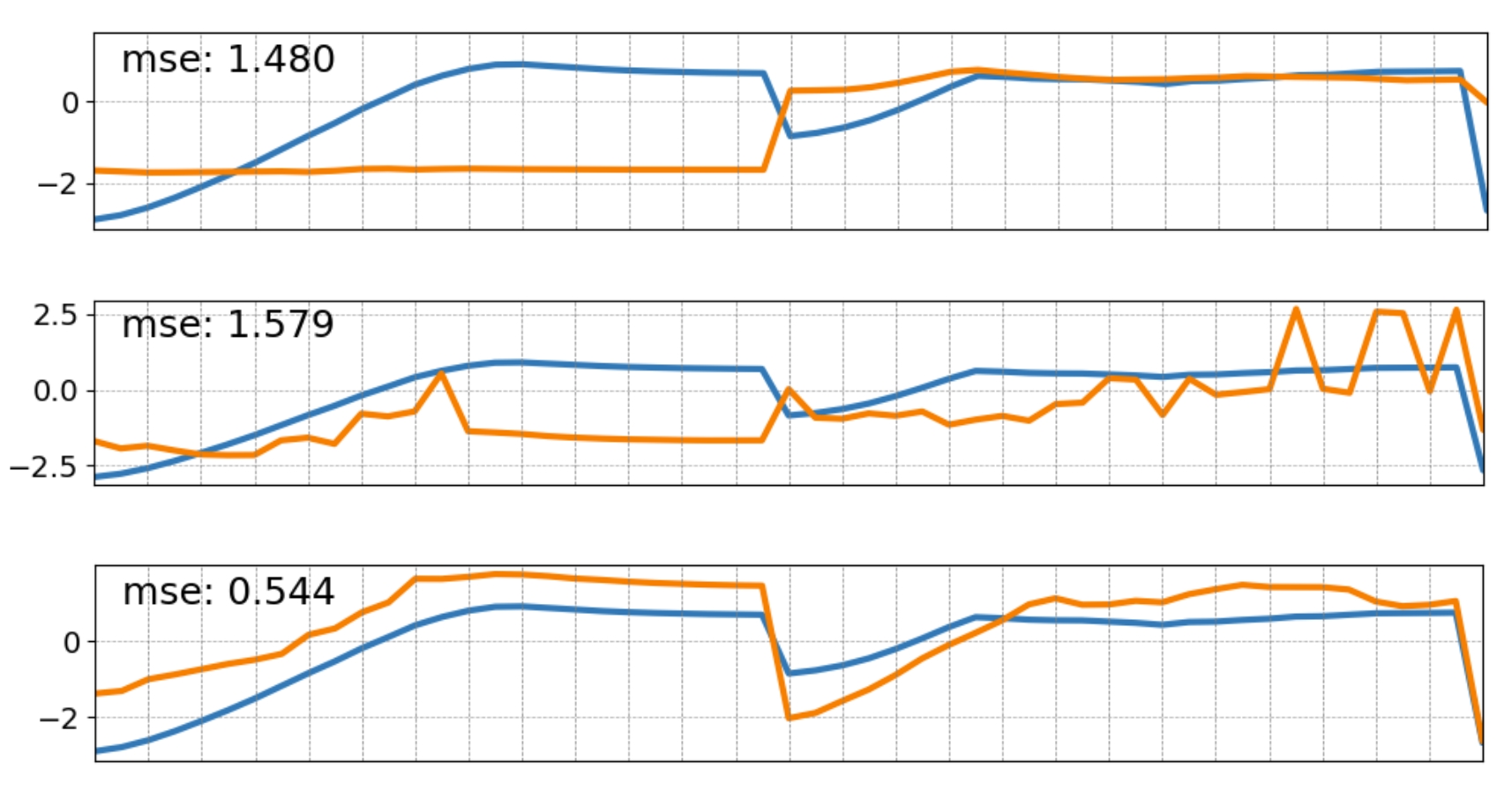}
        \subcaption{reference-v3}
        \subcaption*{(b1) Pure-Expert $\ $ (b2) Diversified $\alpha=0$ $\ $ (b3) Diversified}
        \label{fig:rewardlines_3}
    \end{subfigure}
    \caption{(a) Averaged reward predictions and ground truth of a trajectory sample on spread-v3. (b) Standardized reward predictions and ground truth of a trajectory sample in reference-v3. When trained with expert data only (b1), $\phi$ experiences a mode collapse, failing to give informative signals. Reward function trained without regularization (b2) shows spiky patterns and tends to accumulate predictions at certain time steps when trained with less diversified datasets as (a). Our method with diversified dataset (b3) gives predictions that approximate the ground truth well.}
    \label{fig:rewardlines}
    \vspace{-5mm}
\end{figure}

\begin{table*}[t]
\centering
\begin{tabular}{l|lllll}
\toprule
Algorithm &Dataset &  Spread-v3 & Reference-v3 & Overcooked \\
\midrule
MAIQL & Diversified &-25.33 \std{1.40}   &\textbf{-22.15 \std{0.55}}   &\textbf{16.59  \std{11.22}}   \\
    &Mix-Unilateral &-23.25 \std{1.06}   &-23.22 \std{1.37}   &0.00   \std{0.00}   \\
    &Mix-Expert     &-23.26 \std{0.90}   &-24.21 \std{1.60}   &0.00   \std{0.00}   \\
    &Pure-Expert    &-26.01 \std{1.53}   &-29.47 \std{1.65}   &0.00   \std{0.00}\\
    \midrule 
MABCQ & Diversified &-20.02 \std{0.64}   &\textbf{-17.64 \std{0.43}}   &\textbf{239.34 \std{1.67}}\\
    &Mix-Unilateral &-19.47 \std{0.33}   &-17.64 \std{1.11}   &215.01 \std{65.43}\\
    &Mix-Expert     &-19.42 \std{0.17}   &-17.88 \std{0.78}   &50.32  \std{82.82}\\
    &Pure-Expert    &-20.56 \std{0.38}   &-25.90 \std{1.11}   &1.14   \std{3.46}\\
\bottomrule
\end{tabular}
\caption{Test returns of MAIQL and MABCQ. In the experimental results, we can observe a clear preference toward more diversified datasets. 
Compared to our method and BCQ, which directly calculate $\max_a Q$ for Bellman updates, IQL employs expectile regression to estimate it. So MAIQL demands higher accuracy of the reward model. Consequently, the performance improvements brought by dataset diversity are also more pronounced in MAIQL experiments. }
\label{tab:exp-bcqiql}
\end{table*}

\begin{table*}[t]
\centering
\begin{tabular}{lrrrrrr}
\toprule
& $\beta=0$ & $\beta=0.1$ & $\beta=1$ & $\beta=10$ & $\beta=100$ & $\alpha=0$ \\
\midrule
Spread-v3 & -22.56 \std{1.61} & -22.03 \std{0.67} & -20.82 \std{0.53} & -20.46 \std{0.51} & -20.35 \std{ 0.43} & -22.21 \std{0.72} \\
Tag-v3 & 4.11 \std{1.66} & 4.25 \std{0.53} & 10.96 \std{1.20} & 28.88 \std{1.02} & 29.53 \std{1.35} & 30.77 \std{0.57} \\
Reference-v3 & -19.69 \std{0.36} & -19.37 \std{0.53} & -18.89 \std{0.78} & -18.33 \std{0.42} & -18.54 \std{0.46} & -21.86 \std{0.73} \\
Overcooked& 0.00 \std{0.00} &0.00 \std{0.00} & 149.53 \std{86.74} & 238.89 \std{3.50} & \textbf{240 \std{0.00}} & \textbf{240 \std{0.00}}\\
\bottomrule
\end{tabular}
\caption{Comparison of test return with different hyperparameters. Standard pipeline take pessimism coefficient $\beta=1$ for Spread-v3, Reference-v3 and $\beta=10$ for Tag-v3, Overcooked, and the MSE reward regularization coefficient $\alpha$ is set to the optimal value for fixed $\beta$. All the agents are trained on Diversified Dataset across 10 random seeds. Results show that larger $\beta$ always gives better performance and a proper positive $\alpha$ can improve performance. 
}
\label{tab: tech-mpe}
\vspace{-3mm}
\end{table*}

\subsection{Other Ablation Studies}
\label{subsec: other discussion}

\paragraph{Reward regularization}
\label{subsec: ablation studies-reward regularization}
In Figure~\ref{fig:rewardlines}, we examined the effectiveness of our proposed reward regularization technique.
Figure~\ref{fig:rewardlines_mixed_noMSE} demonstrates that without regularization, the learned rewards tend to be sparse and spiky compared to the ground truth rewards.
We also observe that the rewards often exhibit temporal continuity, which can create greater discrepancies with the sparse, pulse-like ground truth. 
Notably, we found that adding stronger regularization does not necessarily lead to underfitting of the reward model; in some cases, it even helps the model converge to a lower training loss. Detailed parameters and experimental results are provided in the appendix (cf. Table \ref{tab: mse-ablation}). We attribute this to the role of regularization in preventing the model from overly relying on shortcuts.

\paragraph{Pessimism coefficient} Due to the clipping in Equation \ref{equ: Final Reward Function}, excessively large \(\beta\) values will not dominate the entire reward function. As a result, larger \(\beta\) values almost never degrade the agent's performance in our experiments (Table~\ref{tab: tech-mpe}). This allows us to increase \(\beta\) with relative confidence. Therefore, we generally recommend setting \(\beta\) to a value between 10 and 100 for optimal performances.

\paragraph{Scalability}
We also tested the scalability on Spread-v3. While our current approach manages the scaling of agents without introducing new problems, it does not specifically address the inherent issues of instability and complexity that are well-documented in traditional MARL (cf. Appendix~\ref{apd: scalability analysis}).

\section{Discussion}

In this paper, we proposed dedicated algorithmic techniques for offline PbMARL and provided theoretical justification for the unilateral dataset coverage condition. We believe our work is a significant step towards systematically studying PbMARL and offers a foundational framework for future research in this area. The flexibility of our framework allows for application across a wide range of general games, and our empirical results validate the effectiveness of our proposed methods in various scenarios.

Looking ahead, there is significant potential to extend this work to more complex, real-world scenarios, particularly by integrating Large Language Models (LLMs) into multi-agent systems. Future research will focus on fine-tuning and aligning LLMs within PbMARL, addressing challenges such as increased complexity and the design of effective reward structures.

\bibliographystyle{plainnat}
\bibliography{ref.bib}

\newpage
\appendix

\section{Missing Proofs in Section \ref{section:theory}}\label{apd:proof}

\subsection{Single Policy Coverage is Insufficient}
\label{apd:single_policy}

\begin{table}[h]
    \centering
    \begin{minipage}{0.45\textwidth}
        \centering
        \begin{tabular}{|c|cc|}
        \hline
          & $a_1$ & $a_2$ \\
        \hhline{---}
        $b_1$ & 0.5 & 1 \\
        $b_2$ & 0 & 0.5 \\
        \hline
        \end{tabular}
    \end{minipage}
    \hfill
    \begin{minipage}{0.45\textwidth}
        \centering
        \begin{tabular}{|c|cc|}
        \hline
          & $a_1$ & $a_2$ \\
        \hhline{---}
        $b_1$ & 0.5 & 0 \\
        $b_2$ & 1 & 0.5 \\
        \hline
        \end{tabular}
    \end{minipage}
    \vspace{0.3cm}
    \caption{Here we present two matrix games, $\mathcal{M}_1$ (left) and $\mathcal{M}_2$ (right). The row player aims to maximize their reward, while the column player aims to minimize it. The Nash Equilibrium in $\mathcal{M}_1$ is $(a_1, b_1)$, and in $\mathcal{M}_2$ it is $(a_2, b_2)$. Note that with a dataset covering only these two states, it is impossible to distinguish between these two games, and therefore, it is not possible to identify the exact Nash Equilibrium.}
    \label{apd: theorem 1 example}
\end{table}

\begin{theorem}
    (Restatement of Theorem \ref{thm: Single Policy Coverage}) For any algorithm and constant $C>0$, there exists a Markov game and a compliant dataset with $U_{\D}(\pi^*)\leq C$ such that the output policy is at most an $0.5$-Nash equilibrium.
    \label{thm: restatement of single policy}
\end{theorem}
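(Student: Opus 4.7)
The plan is to prove the theorem by an indistinguishability argument based on a two-game construction that mirrors the matrix game pair shown in the theorem's accompanying table. I would instantiate $\mathcal{M}_1$ and $\mathcal{M}_2$ as $H=1$ general-sum Markov games (i.e.\ single-step matrix games) with a shared state and the same action space, and arrange their payoff matrices so that (i) the two games share a common on-support cell whose rewards to each player are identical in both worlds, and (ii) their Nash equilibria sit at different joint actions with a gap of at least $1/2$ against the ``wrong'' game. The table in the statement already provides a candidate: the pair of $2\times 2$ payoff structures with Nash equilibria at $(a_1,b_1)$ and $(a_2,b_2)$ respectively, whose values at the equilibrium cells coincide while unilateral deviations swing reward by $1/2$.

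The key steps are then as follows. First, I would embed the two matrix games into the linear Markov game framework by choosing a feature map $\psi$ whose rows correspond to joint actions and choosing parameters $\theta_{h,i}^{(1)},\theta_{h,i}^{(2)}$ that realize the two reward tables. Second, I would construct the offline preference dataset $\D$ supported only on the Nash equilibrium joint action of $\mathcal{M}_1$, repeating this single-cell trajectory $N$ times and scaling $\lambda$ in $\Sigma_\D^r,\Sigma_{\D,h}^{\P}$ so that $U_\D(\pi^*)\le C$ for the prescribed constant. Third, because the per-step reward at the covered cell is identical in both games, every sampled trajectory yields identical return sums in both worlds, and the Bradley-Terry-Luce comparison probability $\P(y_i=1\mid\tau,\tau')$ collapses to the same value in $\mathcal{M}_1$ and $\mathcal{M}_2$; hence the joint law of $\D$ is exactly the same under the two games. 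Finally, by a standard two-point Le Cam style argument, for any (possibly randomized) algorithm $\mathrm{Alg}$ the output policy $\widehat\pi=\mathrm{Alg}(\D)$ has the same distribution in both worlds, so its expected Nash gap, averaged over the two games, is at least the minimum over all policies of $\tfrac12(\mathrm{Nash\text{-}Gap}_{\mathcal{M}_1}(\pi)+\mathrm{Nash\text{-}Gap}_{\mathcal{M}_2}(\pi))$, which the construction forces to be $\ge 1/2$.

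I anticipate two main technical obstacles. The first is ensuring that the reward coincidence on the dataset's support is strict (not merely up to an additive shift), since although BTL is shift-invariant across trajectories of equal length, any difference in on-support rewards between $\mathcal{M}_1$ and $\mathcal{M}_2$ would in principle be detectable given infinite data; the construction must therefore place the reward disagreement strictly off the support, which in turn forces a careful choice of the feature map so that both $\theta^{(1)}$ and $\theta^{(2)}$ satisfy $\Norm{\theta}_h\le\sqrt d$ while producing identical inner products on the single covered feature. The second is verifying the coverage bound quantitatively: $U_\D(\pi^*)$ depends on the covariance inverses, so I must trade off the repetition count $N$ and the regularizer $\lambda$ so that the on-support policy has covariance norm at most $C$ in the $[\Sigma_\D^r]^{-1}$ and $[\Sigma_{\D,h}^\P]^{-1}$ metrics — a routine linear algebra tuning, but one that must be done carefully to keep the construction compliant with the linear Markov game assumption.
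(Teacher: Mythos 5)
Your overall strategy is the paper's own: two $2\times 2$ matrix games realizing the table, a one-hot linear embedding, a dataset whose on-support rewards coincide in both worlds so the Bradley--Terry--Luce law of $\D$ is identical, and a two-point indistinguishability argument concluding that any output policy is at most a $0.5$-Nash equilibrium in one of the two games. However, there is one genuine gap in your construction: you place the dataset's support \emph{only} on the Nash equilibrium cell of $\mathcal{M}_1$ (``this single-cell trajectory $N$ times,'' ``the single covered feature''). This breaks the quantifier structure of the theorem. The witness pair (game, dataset) must satisfy $U_{\D}(\pi^*)\leq C$ for the Nash equilibrium $\pi^*$ of \emph{that} game, and since the algorithm is arbitrary, you do not know in advance which of $\mathcal{M}_1,\mathcal{M}_2$ will serve as the witness. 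With support only on $(a_1,b_1)$, the feature $e_4$ of $\mathcal{M}_2$'s equilibrium $(a_2,b_2)$ is completely uncovered, so only $\mathcal{M}_1$ can satisfy the coverage hypothesis --- and then the trivial algorithm that always outputs the deterministic product policy $(a_1,b_1)$ defeats your construction, since it is an exact Nash equilibrium of the one admissible witness. Moreover, in your single-cell dataset every comparison pair has $\psi(\tau)-\psi(\tau')=0$, so the preference covariance matrix $\Sigma^r_{\D}$ never grows at all; the data is vacuous in a way the theorem's coverage condition is meant to exclude. The paper avoids both problems by taking the behavior policy $\pi^b(a_1,b_1)=\pi^b(a_2,b_2)=1/2$, so that the equilibria of \emph{both} games are covered; since the reward at both cells equals $0.5$ in both games, the comparisons remain identically distributed (indeed uninformative $\mathrm{Ber}(1/2)$ coin flips), so indistinguishability survives while either game can serve as a valid witness.

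Two smaller remarks. First, your concern about shift-invariance versus strict reward coincidence dissolves under the paper's two-cell support: the rewards at the covered cells are literally equal across the two games, not merely equal up to a shift, so nothing is detectable even with infinite data. Second, your instinct to tune $\lambda$ when verifying $U_{\D}(\pi^*)\leq C$ is in fact needed even in the paper's version: because BTL preferences are shift-invariant, $\Sigma^r_{\D}$ only accumulates mass in the difference direction $e_1-e_4$, and one can check that $\Norm{e_1}_{[\Sigma^r_{\D}]^{-1}}$ tends to $1/\sqrt{2\lambda}$ rather than $0$ as the sample size grows; so driving the reward-uncertainty term below an arbitrarily small $C$ requires scaling $\lambda$ (the paper glosses this by saying ``with enough samples''). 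That bookkeeping point is shared by both arguments; the substantive defect specific to your proposal is the single-cell support, and the fix is exactly the paper's symmetric behavior policy.
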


\begin{proof}
    We construct two linear Markov games with a shared compliant dataset such that no policy is a good approximate Nash equilibrium in both Markov games. Similar to \citep{cui2022offline}, we consider Markov games with $H=1$, $m=2$, $\A_1=\{a_1,a_2\}$ and $\A_2=\{b_1,b_2\}$ with deterministic reward presented in Table \ref{apd: theorem 1 example}. 
    
    The feature mapping for these two games is 
    $$\psi(a_1,b_1)=e_1, \psi(a_1,b_2)=e_2, \psi(a_2,b_1)=e_3, \psi(a_2,b_2)=e_4,$$
    where $e_i\in\R^4$ are the unit base vectors. Directly we have the reward parameters $\theta$ as the rewards.
    
    The behavior policy is $\pi^b(a_1,b_1)=\pi^b(a_2,b_2)=1/2$ and dataset is $\D=\{(\tau_i,\tau'_i,y_i)\}_{i=1}^n$ with
    $$\tau_i,\tau'_i\in\{(a_1,b_1),(a_2,b_2),y_i\sim \mathrm{Ber}(\exp(r_1(\tau_i)-r_1(\tau'_i)))\}.$$

    As the dataset covers the Nash equilibrium for both games, with enough samples, we have $U_{\D}(\pi^*)\leq C$ for any constant $C$. Suppose the output policy of the algorithm is $\pi=(\mu,\nu)$, then $\pi$ is at most 0.5-Nash equilibrium in one of these two games \footnote{See proof for Theorem 3.3 in \citep{cui2022offline}}.
\end{proof}


\subsection{Unilateral Policy Coverage}
\label{apd:unilateral}
\begin{algorithm}[ht!]
    \caption{Value Estimation}
    \label{algo:value estimation}
    \begin{algorithmic}[1]
        \State {\bfseries Input}: Offline dataset $\D$, player index $i$, policy $\pi$.
        \State {\bfseries Initialization}: $\underline{V}_{H+1,i}^\pi(s)=0$.
        \For{$h=H,H-1,\cdots,1$}
        \State $w_{h,i}=[\Sigma_{\D,h}^\P]^{-1}\sum_{n=1}^N\psi(s_h^n,\a_h^n)[\underline{r}_{h,i}(s_h^n,\a_h^n)+\underline{V}^{\pi}_{h+1,i}(s_{h+1}^n)]$.
        \State $\underline{Q}^{\pi}_{h,i}(\cdot,\cdot)=\max\{\inner{\psi(\cdot,\cdot),w_{h,i}}-C_{\P}[\psi(\cdot,\cdot)^\top[\Sigma_{\D,h}^\P]^{-1}\psi(\cdot,\cdot)]^{1/2},0\}$
        \State $\underline{V}^{\pi}_{h,i}(\cdot)=\E_{a\sim\pi_h(\cdot)}\underline{Q}^{\pi}_{h,i}(\cdot,\a).$
        \EndFor
    \end{algorithmic}
\end{algorithm}

\begin{algorithm}[ht!]
    \caption{Best Response Estimation}
    \label{algo:best response estimation}
    \begin{algorithmic}[1]
        \State {\bfseries Input}: Offline dataset $\D$, player index $i$, policy $\pi_{-i}$.
        \State {\bfseries Initialization}: $\overline{V}_{H+1,i}^{\dagger,\pi_{-i}}(s)=0$.
        \For{$h=H,H-1,\cdots,1$}
        \State $w_{h,i}=[\Sigma_{h,\D}^\P]^{-1}\sum_{n=1}^N\psi(s_h^n,\a_h^n)[\overline{r}_h(s_h^n,\a_h^n)+\overline{V}^{\dagger,\pi_{-i}}_{h+1,i}(s_{h+1}^n)]$.
        \State $\overline{Q}^{\dagger,\pi_{-i}}_{h,i}(\cdot,\cdot)=\min\{\inner{\psi(\cdot,\cdot),w_{h,i}}+\beta[\psi(\cdot,\cdot)^\top[\Sigma_{h,\D}^\P]^{-1}\psi(\cdot,\cdot)]^{1/2},H\}$
        \State $\overline{V}^{\dagger,\pi_{-i}}_{h,i}(\cdot)=\max_{a_i\in\A_i}\E_{\a_{-i}\sim\pi_{h,-i}(\cdot)}\overline{Q}^{\dagger,\pi_{-i}}_{h,i}(\cdot,\a).$
        \EndFor
    \end{algorithmic}
\end{algorithm}

\begin{algorithm}[ht!]
    \caption{Surrogate Minimization}
    \label{algo:surrogate minimization}
    \begin{algorithmic}[1]
        \State {\bfseries Input}: Offline dataset $\D$.
        \State {\bfseries Initialization}: Algorithm \ref{algo:value estimation} for computing $\underline{V}_{1,i}^{\pi}$ and Algorithm \ref{algo:best response estimation} for computing $\overline{V}_{1,i}^{\dagger,\pi_{-i}}(s_1)$.
        \State {\bfseries Output}: $\pi^{\mathrm{output}}=\argmin_{\pi}\sum_{i\in[m]}\Mp{\overline{V}_{1,i}^{\dagger,\pi_{-i}}(s_1)-\underline{V}_{1,i}^{\pi}(s)}$
    \end{algorithmic}
\end{algorithm}

In our framework, we utilize Maximum Likelihood Estimation (MLE) to estimate the reward function for each player. For simplicity, we omit the subscript $i$ for player $i$. 
Note that the reward function can be expressed as $r_{\theta}(\tau)=\sum_{h=1}^Hr_{h}(s_h,\a_h)=\inner{\psi(\tau),\theta}$, where $\theta=[\theta_1,\theta_2,\cdots,\theta_H]$ represents the parameters we aim to optimize. 
At each step $h$, we minimize the NLL loss:
$$\widehat{\theta}=\argmin_{\theta_h\leq\sqrt{d},h\in[H]}-\sum_{n=1}^N\log\Sp{\frac{1(y^n=1)\exp(r_\theta(\tau))}{\exp(r_\theta(\tau))+\exp(r_\theta(\tau'))}+\frac{1(y^n=0)\exp(r_\theta(\tau'))}{\exp(r_\theta(\tau))+\exp(r_\theta(\tau'))}}.$$
This optimization problem helps in learning a reward function that aligns well with the observed data.

By Lemma \ref{lemma:reward concentration}, we establish a confidence region for the estimated parameters $\theta$: 
$$\Theta=\Bp{\theta:\Norm{\theta-\widehat{\theta}}_{\Sigma_{\D}^r+\lambda I}\leq C_{r}=C\sqrt{\frac{dH+\log(1/\delta)}{\lambda^2n}+d}}.$$
We define the optimistic reward and the pessimistic reward as follows:
$$\overline{r}_{h}(s,\a):=\max_{\theta\in\Theta}\inner{\psi(s,\a),\theta_h},\underline{r}_{h}(s,\a):=\min_{\theta\in\Theta}\inner{\psi(s,\a),\theta_h}.$$

We define the Bellman operator:
$$[\B_{h,i}V_{h+1,i}](s,\a)=r_{h,i}(s,\a)+\sum_{s'\in\S}P(s'\mid s,\a)V_{h+1,i}(s), \forall i\in[m], h\in[H], s\in\S, \a\in\A.$$

\begin{lemma}\label{lemma:reward bound}
    With probability at least $1-\delta$, we have
    $$r_{h,i}(s,\a)-2C_r\Norm{\overline{\psi}_h(s,\a)}_{[\Sigma_{\D}^r+\lambda I]^{-1}}\leq\underline{r}_{h,i}(s,\a)\leq r_{h,i}(s,\a)\leq \overline{r}_{h,i}(s,\a)\leq r_{h,i}(s,\a)+2C_r\Norm{\overline{\psi}_h(s,\a)}_{[\Sigma_{\D}^r+\lambda I]^{-1}}.$$
\end{lemma}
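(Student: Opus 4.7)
The plan is to derive these bounds as an almost immediate consequence of the confidence region $\Theta$, by combining Lemma \ref{lemma:reward concentration} (which guarantees that the true reward parameter lies in $\Theta$ with high probability) with Cauchy–Schwarz applied in the weighted norm induced by $\Sigma_\D^r + \lambda I$. The one-hot lifting $\overline{\psi}_h$ is precisely the device that lets us express the per-step reward $r_{h,i}(s,\a)=\langle \psi(s,\a),\theta_h\rangle$ as a global inner product $\langle \overline{\psi}_h(s,\a),\theta\rangle$ in the concatenated parameter, so that the confidence-region norm on $\theta$ can be converted directly into an elliptic uncertainty bound on the per-step reward.

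First, I would apply Lemma \ref{lemma:reward concentration} to conclude that with probability at least $1-\delta$, the true parameter $\theta^*$ satisfies $\Norm{\theta^*-\widehat\theta}_{\Sigma_\D^r+\lambda I}\leq C_r$, i.e.\ $\theta^*\in\Theta$. I then condition on this event for the rest of the argument. Second, I would pick an arbitrary $\theta\in\Theta$ and, by the triangle inequality,
\[
\Norm{\theta-\theta^*}_{\Sigma_\D^r+\lambda I}\leq \Norm{\theta-\widehat\theta}_{\Sigma_\D^r+\lambda I}+\Norm{\widehat\theta-\theta^*}_{\Sigma_\D^r+\lambda I}\leq 2C_r.
\]
Third, using $\langle \psi(s,\a),\theta_h\rangle=\langle \overline{\psi}_h(s,\a),\theta\rangle$ together with the Cauchy–Schwarz inequality in the $(\Sigma_\D^r+\lambda I)$-norm gives
\[
\bigl|\langle \overline{\psi}_h(s,\a),\theta-\theta^*\rangle\bigr|\leq \Norm{\overline{\psi}_h(s,\a)}_{[\Sigma_\D^r+\lambda I]^{-1}}\cdot\Norm{\theta-\theta^*}_{\Sigma_\D^r+\lambda I}\leq 2C_r\Norm{\overline{\psi}_h(s,\a)}_{[\Sigma_\D^r+\lambda I]^{-1}}.
\]

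Finally, to obtain the stated sandwich I would specialize this two-sided bound to the maximizer and minimizer used in the definitions of $\overline r_{h,i}$ and $\underline r_{h,i}$. Since $\theta^*\in\Theta$, the definition of $\overline r_{h,i}$ as a max over $\Theta$ immediately yields $\overline r_{h,i}(s,\a)\geq r_{h,i}(s,\a)$, and symmetrically $\underline r_{h,i}(s,\a)\leq r_{h,i}(s,\a)$. Conversely, the uncertainty bound above applied to the maximizer $\theta$ achieving $\overline r_{h,i}$ (respectively the minimizer for $\underline r_{h,i}$) yields the matching upper and lower inequalities. There is no real obstacle here beyond invoking Lemma \ref{lemma:reward concentration}; the only subtlety worth flagging is that the lemma statement uses $\Sigma_\D^r+\lambda I$ while the definition of $\Sigma_\D^r$ in the main text already contains the $\lambda I$ regularization, so I would either absorb this into notation or briefly remark on the convention so the proof reads cleanly.
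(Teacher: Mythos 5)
Your proposal is correct and matches the paper's own proof in all essentials: both condition on the event of Lemma \ref{lemma:reward concentration}, obtain the inner sandwich directly from the definitions of $\overline{r}_{h,i}$ and $\underline{r}_{h,i}$ as max/min over $\Theta$, and obtain the outer $2C_r$ bounds via Cauchy--Schwarz in the $(\Sigma_{\D}^r+\lambda I)$-weighted norm using the one-hot lifting $\overline{\psi}_h$. The only cosmetic difference is that you apply the triangle inequality to $\theta-\theta^*$ before a single Cauchy--Schwarz step, whereas the paper splits $\overline{\theta}-\theta$ through $\widehat{\theta}$ and applies Cauchy--Schwarz to each piece, yielding the same $C_r+C_r$ bound.
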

This lemma establishes bounds on the reward function $r_{h,i}(s,\a)$. Specifically, the optimistic reward and pessimistic reward are under constrains at a high probability, up to a margin of error determined by $C_r$ and the norms of the feature representation $\overline{\psi}_h(s,\a)$.

\begin{proof}
    We begin by referencing Lemma \ref{lemma:reward concentration}, which establishes that with probability at least $1 - \delta$, the estimated parameters $\theta$ reside within the confidence region $\Theta$. This allows us to assert the following relationships: 
    $$\underline{r}_{h,i}(s,\a)=\min_{\theta\in\Theta}\inner{\psi(s,\a),\theta_h}\leq r_{h,i}(s,\a)\leq\overline{r}_{h,i}(s,\a)=\max_{\theta\in\Theta}\inner{\psi(s,\a),\theta_h}.$$
    Next, we quantify the deviation between the optimistic reward and the true reward:
    \begin{align*}
        &\overline{r}_{h,i}(s,\a)-r_{h,i}(s,\a)\\
        =& \inner{\psi(s,\a),\overline{\theta}_h}-\inner{\psi(s,\a),\theta_h}\\
        =&\inner{\overline{\psi}_h(s,\a),\overline{\theta}-\widehat{\theta}}+\inner{\overline{\psi}_h(s,\a),\widehat{\theta}-\theta}\\
        \leq& \Norm{\overline{\psi}_h(s,\a)}_{[\Sigma_{\D}^r+\lambda I]^{-1}}\Norm{\overline{\theta}-\widehat{\theta}}_{\Sigma_{\D}^r+\lambda I}+\Norm{\overline{\psi}_h(s,\a)}_{[\Sigma_{\D}^r+\lambda I]^{-1}}\Norm{\widehat{\theta}-\theta}_{\Sigma_{\D}^r+\lambda I}\\
        \leq& 2C_{r}\Norm{\overline{\psi}_h(s,\a)}_{[\Sigma_{\D}^r+\lambda I]^{-1}}.
    \end{align*}
    A similar argument can be applied to show that the pessimistic reward is also bounded:
    $$r_{h,i}(s,\a)-\underline{r}_{h,i}(s,\a)\leq2C_{r}\Norm{\overline{\psi}_h(s,\a)}_{[\Sigma_{\D}^r+\lambda I]^{-1}}.$$
\end{proof}

\begin{lemma}\label{lemma:value opt/pes}
    With probability at least \( 1 - \delta \), the following bounds hold for all time steps \( h \in [H] \), for each agent \( i \in [m] \), for all states \( s \in \mathcal{S} \), and actions \( a \in \mathcal{A} \):
    \begin{align*}
        \underline{V}^{\pi}_{h,i}(s,a) &\leq V^{\pi}_{h,i}(s,a), \\
        \overline{V}^{\dagger,\pi_{-i}}_{h}(s,a) &\geq V^{\dagger,\pi_{-i}}_h(s,a).
    \end{align*}
\end{lemma}

\begin{proof}
    We prove the statements by mathematical induction. The base case for step \( H + 1 \) holds trivially, as all quantities are zero at this final step. 
    Suppose step $h+1$ holds and we consider step $h$. We will show they also hold for step \( h \). 
    For the first argument of pessimistic value, we have
    \begin{align*}
        \underline{V}_{h,i}^\pi(s)
        =&\E_{\a\sim\pi_h(s)}\underline{Q}^{\pi}_{h,i}(s,\a)\\
        \leq&\E_{\a\sim\pi_h(s)}\Mp{\B_{h,i}\underline{V}^{\pi}_{h+1,i}(s,\a)}\tag{Lemma \ref{lemma:opt/pes}}\\
        \leq&\E_{\a\sim\pi_h(s)}\Mp{\B_{h,i}V^{\pi}_{h+1,i}(s,\a)}\tag{Lemma \ref{lemma:value opt/pes}}\\
        =& V_{h,i}^{\pi}(s).
    \end{align*}
    For the second argument of optimistic value, we have
    \begin{align*}
        \overline{V}^{\dagger,\pi_{-i}}_{h,i}(s,\a)
        =&\max_{a_i\in\A_i}\E_{\a_{-i}\sim\pi_{h,-i}(s)}\overline{Q}^{\dagger,\pi_{-i}}_{h,i}(\cdot,\a)\\
        \geq&\max_{a_i\in\A_i}\E_{\a_{-i}\sim\pi_{h,-i}(s)}\Mp{\B_{h,i}\overline{V}^{\dagger,\pi_{-i}}_{h+1,i}(s,\a)}\tag{Lemma \ref{lemma:opt/pes}}\\
        \geq&\max_{a_i\in\A_i}\E_{\a_{-i}\sim\pi_{h,-i}(s)}\Mp{\B_{h,i}V^{\dagger,\pi_{-i}}_{h+1,i}(s,\a)}\tag{Lemma \ref{lemma:value opt/pes}}\\
        =&V^{\dagger,\pi_{-i}}_{h,i}(s,\a).
    \end{align*}
    These complete the induction step and shows the lemma holds for all steps.
\end{proof}

\begin{lemma}\label{lemma:value bound}
    With probability at least $1-\delta$, we have
    $$V_{1,i}^\pi(s_1)-\underline{V}_{1,i}^\pi(s_1)\leq\E_{\pi}\Mp{2C_{\P}\sum_{h=1}^H\Norm{\psi(s_h,\a_h)}_{[\Sigma_{\D,h}^{\P}+\lambda I]^{-1}}+2C_{r}\sum_{h=1}^H\Norm{\overline{\psi}(s_h,\a_h)}_{[\Sigma_{\D}^{r}+\lambda I]^{-1}}},$$
    $$\overline{V}_{1,i}^{\dagger,\pi_{-i}}(s_1)-V_{1,i}^{\dagger,\pi_{-i}}(s_1)\leq\E_{\pi_i^{\dagger},\pi_{-i}}\Mp{\sum_{h=1}^H2C_{\P}\Norm{\psi(s_h,\a_h)}_{[\Sigma_{\D}^{\P}+\lambda I]^{-1}}+2C_{r}\sum_{h=1}^H\Norm{\overline{\psi}(s_h,\a_h)}_{[\Sigma_{\D}^{r}+\lambda I]^{-1}}}.$$
\end{lemma}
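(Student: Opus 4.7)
The plan is to prove both inequalities by a simulation-lemma-style backward induction on $h$, decomposing the per-step error into a reward-estimation piece controlled by $C_r$ and a Bellman-regression piece controlled by $C_\P$.

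First I would establish a one-step concentration inequality for the ridge estimator $w_{h,i}$ in Algorithm \ref{algo:value estimation}. Under the linear Markov game assumption, the target $\underline{r}_{h,i}(s,\a) + \E_{s'\sim\P_h(\cdot\mid s,\a)}\underline{V}_{h+1,i}^\pi(s')$ is linear in $\psi(s,\a)$, so a self-normalized concentration argument in the spirit of Lemma \ref{lemma:reward concentration} yields, on a high-probability event,
$$\left|\inner{\psi(s,\a), w_{h,i}} - \underline{r}_{h,i}(s,\a) - \E_{s'\sim\P_h(\cdot\mid s,\a)}\underline{V}_{h+1,i}^\pi(s')\right| \leq C_\P\Norm{\psi(s,\a)}_{[\Sigma_{\D,h}^\P+\lambda I]^{-1}},$$
with $C_\P$ calibrated to cover an $\epsilon$-net of the data-dependent pessimistic value-function class. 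Combining this with the clipping defining $\underline{Q}_{h,i}^\pi$ (and using $Q_{h,i}^\pi\geq 0$ so the $\max\{\cdot,0\}$ preserves pessimism) and with Lemma \ref{lemma:reward bound}, I obtain the recursive per-step bound
$$Q_{h,i}^\pi(s,\a) - \underline{Q}_{h,i}^\pi(s,\a) \leq 2C_r\Norm{\overline{\psi}_h(s,\a)}_{[\Sigma_\D^r+\lambda I]^{-1}} + 2C_\P\Norm{\psi(s,\a)}_{[\Sigma_{\D,h}^\P+\lambda I]^{-1}} + \E_{s'\sim\P_h(\cdot\mid s,\a)}[V_{h+1,i}^\pi(s') - \underline{V}_{h+1,i}^\pi(s')].$$
Taking $\E_{\a\sim\pi_h(s)}$, iterating from $h=1$ to $H$ with terminal condition $V_{H+1,i}^\pi = \underline{V}_{H+1,i}^\pi = 0$, and collapsing the nested expectations into a single trajectory expectation under $\pi$ yields the first inequality.

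For the second inequality I would repeat the argument on the optimistic side: replace $\underline{r},\underline{V}^\pi$ with $\overline{r},\overline{V}^{\dagger,\pi_{-i}}$, flip the sign of the bonus, and clip at $H$ rather than $0$. The only substantive change is in the action distribution at each step. Since $\overline{V}_{h,i}^{\dagger,\pi_{-i}}(s) = \max_{a_i\in\A_i}\E_{\a_{-i}\sim\pi_{h,-i}(s)}\overline{Q}_{h,i}^{\dagger,\pi_{-i}}(s,\a)$, letting $\pi_{h,i}^\dagger(s)$ denote the maximizing action produces a joint step-$h$ action distribution drawn from $(\pi_i^\dagger, \pi_{-i})$, and the same telescoping now collapses into $\E_{\pi_i^\dagger,\pi_{-i}}$, matching the claim.

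The main obstacle will be the one-step concentration that fixes $C_\P$: because $\underline{V}_{h+1,i}^\pi$ depends on the same offline dataset through later-step ridge estimators, Hoeffding cannot be applied directly. The standard remedy is a uniform bound over an $\epsilon$-covering of the parametric class $\{\max\{\inner{\psi,w}-b\Norm{\psi}_{[\Sigma+\lambda I]^{-1}}, 0\}\}$ of possible pessimistic value functions, which inflates $C_\P$ by a polylog factor and mirrors the argument of \cite{zhong2022pessimistic}. Once that bound is in place, the rest of the argument is routine backward induction.
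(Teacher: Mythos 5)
Your proposal is correct and takes essentially the same route as the paper: a backward-induction/telescoping argument that splits the per-step error into a reward piece controlled by Lemma~\ref{lemma:reward bound} (the $2C_r\Norm{\overline{\psi}_h}_{[\Sigma_\D^r+\lambda I]^{-1}}$ term) and a Bellman-regression piece controlled by the $2C_\P$ bonus, with the best-response side handled exactly as you describe, by passing the max through to the greedy action of the optimistic estimate so the telescoping collapses into $\E_{\pi_i^\dagger,\pi_{-i}}$. The only difference is that where you propose to re-derive the one-step concentration via a self-normalized bound plus an $\epsilon$-net over the data-dependent pessimistic value class, the paper imports this wholesale as Lemma~\ref{lemma:opt/pes} (Lemma A.1 of \citep{zhong2022pessimistic}), which is proved by precisely the covering argument you outline.
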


  This lemma establishes bounds on the difference between the expected value of the policy and its lower or upper estimates. Using properties of the Bellman operator, we can show that the difference between \( V_{1,i}^\pi(s_1) \) and \( \underline{V}_{1,i}^\pi(s_1) \) is controlled by the expected norms of the feature representations \( \psi \) and \( \overline{\psi} \), scaled by constants \( C_{\mathcal{P}} \) and \( C_r \). 

\begin{proof}
We prove each bound separately, utilizing the results from Lemma \ref{lemma:opt/pes}.

For the first bound of pessimistic value estimation, we analyze the expected action-value \( Q^\pi_{1,i}(s_1,\a) \) and its lower estimate, applying the Bellman operator iteratively. We have
    \begin{align*}
        &V_{1,i}^\pi(s_1)-\underline{V}_{1,i}^\pi(s_1)\\
        =&\E_{\a\sim\pi_1(s_1)}Q^\pi_{1,i}(s_1,\a) -\E_{\a\sim\pi_1(s_1)}\underline{Q}^\pi_{1,i}(s_1,\a)\\
        \leq&\E_{\a\sim\pi_1(s_1)}\Mp{\B_{1,i}V^{\pi}_{2}(s_1,\a)-\B_{1,i}\underline{V}_{2}(s_1,\a)+2C_{\P}\Norm{\psi(s_1,\a)}_{[\Sigma_{\D,1}^{\P}+\lambda I]^{-1}}+2C_{r}\Norm{\overline{\psi}_1(s_1,\a)}_{[\Sigma_{\D}^{r}+\lambda I]^{-1}}}\\
        =&\E_{\a\sim\pi_1(s)}\Mp{V_{2,i}^\pi(s_2)-\underline{V}_{2,i}^\pi(s_2)+2C_{\P}\Norm{\psi(s_1,\a)}_{[\Sigma_{\D,1}^{\P}+\lambda I]^{-1}}+2C_{r}\Norm{\overline{\psi}_1(s_1,\a)}_{[\Sigma_{\D}^{r}+\lambda I]^{-1}}}\\
        \leq&\cdots\\
        \leq&\E_{\pi}\Mp{2C_{\P}\sum_{h=1}^H\Norm{\psi(s_h,\a_h)}_{[\Sigma_{\D,h}^{\P}+\lambda I]^{-1}}+2C_{r}\sum_{h=1}^H\Norm{\overline{\psi}_h(s_h,\a_h)}_{[\Sigma_{\D}^{r}+\lambda I]^{-1}}}.
    \end{align*}
    Similarly, for optimistic value estimation, we have
    \begin{align*}
        &\overline{V}_{1,i}^{\dagger,\pi_{-i}}(s_1)-V_{1,i}^{\dagger,\pi_{-i}}(s_1)\\
        =&\max_{a_i\in\A_i}\E_{\a_{-i}\sim\pi_{1,-i}(s_1)}\overline{Q}_{1,i}(s_1,\a)-\max_{a_i\in\A_i}\E_{\a_{-i}\sim\pi_{1,-i}(s_1)}Q^{\dagger,\pi_{-i}}_{1,i}(s_1,\a)\\
        \leq& \E_{\a\sim (\pi_i^\dagger,\pi_{-i})}\Mp{\overline{Q}_{1,i}(s_1,\a)-Q^{\dagger,\pi_{-i}}_{1,i}(s_1,\a)}\\
        \leq & \E_{\a\sim (\pi_i^\dagger,\pi_{-i})}\Mp{\B_1\overline{V}^{\dagger,\pi_{-i}}_{2}(s_1,\a)-\B_1V^{\dagger,\pi_{-i}}_{2}(s_1,\a)+2C_{\P}\Norm{\psi(s_1,\a_1)}_{[\Sigma_{\D}^{\P}+\lambda I]^{-1}}+2C_{r}\Norm{\overline{\psi}_1(s_1,\a)}_{[\Sigma_{\D}^{r}+\lambda I]^{-1}}}\\
        \leq & \cdots\\
        \leq & \E_{\pi_i^{\dagger},\pi_{-i}}\Mp{\sum_{h=1}^H2C_{\P}\Norm{\psi(s_h,\a_h)}_{[\Sigma_{\D}^{\P}+\lambda I]^{-1}}+2C_{r}\sum_{h=1}^H\Norm{\overline{\psi}_h(s_h,\a_h)}_{[\Sigma_{\D}^{r}+\lambda I]^{-1}}}.
    \end{align*}
\end{proof}

\begin{lemma}\label{lemma:gap bound}
    Under the event in Lemma \ref{lemma:value opt/pes},
    $$\text{Nash-gap}(\pi^\mathrm{output})\leq \sum_{i\in[m]}\Mp{\overline{V}_{1,i}^{\dagger,\pi^*_{-i}}(s_1)-\underline{V}_{1,i}^{\pi}(s)}.$$
\end{lemma}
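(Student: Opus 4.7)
The plan is to combine two ingredients: the optimism/pessimism sandwich established in Lemma \ref{lemma:value opt/pes}, and the defining optimization property of the surrogate minimizer in Algorithm \ref{algo:surrogate minimization}. I read the stated RHS as $\sum_{i}[\overline{V}_{1,i}^{\dagger,\pi^*_{-i}}(s_1)-\underline{V}_{1,i}^{\pi^*}(s_1)]$ (treating the displayed $\pi$ and $s$ as typos for $\pi^*$ and $s_1$), since only this reading is consistent with the surrogate objective being minimized.

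First, I would unfold the Nash-gap of $\pi^{\mathrm{output}}$ per player and apply Lemma \ref{lemma:value opt/pes}. Since $\underline{V}_{1,i}^{\pi^{\mathrm{output}}}(s_1)\le V_{1,i}^{\pi^{\mathrm{output}}}(s_1)$ (pessimism on the policy value) and $\overline{V}_{1,i}^{\dagger,\pi^{\mathrm{output}}_{-i}}(s_1)\ge V_{1,i}^{\dagger,\pi^{\mathrm{output}}_{-i}}(s_1)$ (optimism on the best response), termwise substitution and summation over $i\in[m]$ yield
\[
\text{Nash-gap}(\pi^{\mathrm{output}})\;\le\;\sum_{i\in[m]}\Bigl[\overline{V}_{1,i}^{\dagger,\pi^{\mathrm{output}}_{-i}}(s_1)-\underline{V}_{1,i}^{\pi^{\mathrm{output}}}(s_1)\Bigr].
\]
This quantity is precisely the surrogate objective that Algorithm \ref{algo:surrogate minimization} minimizes over all joint policies.

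Second, I would invoke the argmin property: since $\pi^{\mathrm{output}}$ minimizes this surrogate, plugging in any competitor policy can only increase its value. Choosing the Nash equilibrium $\pi^*$ as the competitor gives
\[
\sum_{i\in[m]}\Bigl[\overline{V}_{1,i}^{\dagger,\pi^{\mathrm{output}}_{-i}}(s_1)-\underline{V}_{1,i}^{\pi^{\mathrm{output}}}(s_1)\Bigr]\;\le\;\sum_{i\in[m]}\Bigl[\overline{V}_{1,i}^{\dagger,\pi^*_{-i}}(s_1)-\underline{V}_{1,i}^{\pi^*}(s_1)\Bigr].
\]
Chaining the two displays delivers the claimed bound. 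One must verify that the event of Lemma \ref{lemma:value opt/pes} holds simultaneously for all $i\in[m]$ and for both the output and reference policies, but this is immediate because that lemma is a pointwise statement over policies conditioned on the high-probability event $\theta\in\Theta$ from Lemma \ref{lemma:reward concentration}.

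There is no real obstacle in this lemma itself—it is a two-line pessimism/argmin argument whose role is to isolate the step where the analysis transitions from the Nash-gap of the output to a quantity that only depends on the Nash equilibrium $\pi^*$. The genuine work lies downstream: once the RHS is expressed in terms of $\pi^*$ and its unilateral deviations $(\pi_i,\pi^*_{-i})$, Lemma \ref{lemma:value bound} converts the optimism/pessimism gap into expected bonuses along trajectories drawn under $\pi^*$ and under $(\pi_i^{\dagger},\pi^*_{-i})$, which is exactly where the unilateral coverage quantity $\max_{i,\pi_i} U_{\D}(\pi_i,\pi^*_{-i})$ enters, yielding the final sample complexity for Theorem \ref{thm: Unilateral Policy Coverage}.
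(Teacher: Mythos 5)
Your proof is correct and is essentially the paper's own argument: the paper simply defers to Lemma 21 of \citep{cui2022provably}, whose proof is exactly your two-step chain --- the pessimism/optimism sandwich from Lemma \ref{lemma:value opt/pes} applied to $\pi^{\mathrm{output}}$, followed by the argmin property of Algorithm \ref{algo:surrogate minimization} with the Nash equilibrium $\pi^*$ as the competitor. Your reading of the displayed $\pi$ and $s$ as typos for $\pi^*$ and $s_1$ is also the intended one, as confirmed by how the lemma is invoked in the proof of Theorem \ref{thm:unilateral}.
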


\begin{proof}
    \begin{align*}
        \text{Nash-gap}(\pi^\mathrm{output})
        &= \max_{\pi'} \sum_{i\in[m]}\Mp{{V}_{1,i}^{\pi',\pi^*_{-i}}(s_1)-{V}_{1,i}^{\pi}(s)}\\
        &\leq \sum_{i\in[m]}\Mp{\overline{V}_{1,i}^{\dagger,\pi^*_{-i}}(s_1)-\underline{V}_{1,i}^{\pi}(s)}.
    \end{align*}
    Utilizing Lemma \ref{lemma:value opt/pes}, it is straightforward to derive this lemma. The proof is similar to the proof for Lemma 21 in \citep{cui2022provably}.
\end{proof}

\begin{theorem}
\label{thm:unilateral}
Set $\lambda=1$ for Algorithm \ref{algo:surrogate minimization}. With probability $1-\delta$, we have
    $$\text{Nash-gap}(\pi^{\mathrm{output}})\leq \max_{\pi_i}4\E_{\pi_i,\pi^*_{-i}}\Mp{\sum_{h=1}^HC_{\P}\Norm{\psi(s_h,\a_h)}_{[\Sigma_{\D,h}^{\P}+\lambda I]^{-1}}+C_{r}\sum_{h=1}^H\Norm{\overline{\psi}(s_h,\a_h)}_{[\Sigma_{\D}^{r}+\lambda I]^{-1}}},$$
    where $C_r=\widetilde{O}(\sqrt{dH})$ and $C_{\P}=\widetilde{O}(dH)$.
\end{theorem}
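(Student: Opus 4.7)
The plan is to combine the four appendix lemmas in a standard pessimism-plus-optimism-plus-optimality chain, exploiting the defining Nash property $V_{1,i}^{\dagger,\pi^*_{-i}}(s_1)=V_{1,i}^{\pi^*}(s_1)$ to convert worst-case regret into expected bonuses under unilateral deviations.

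First I would invoke Lemma~\ref{lemma:gap bound} to bound $\text{Nash-gap}(\pi^{\mathrm{output}})$ by $\sum_{i}[\oV_{1,i}^{\dagger,\pi^{\mathrm{output}}_{-i}}(s_1)-\uV_{1,i}^{\pi^{\mathrm{output}}}(s_1)]$. Then I would use the fact that $\pi^{\mathrm{output}}$ is the minimizer of the surrogate objective in Algorithm~\ref{algo:surrogate minimization}, so this quantity is upper bounded by the same surrogate evaluated at any competitor policy; plug in $\pi^*$ to get $\sum_{i}[\oV_{1,i}^{\dagger,\pi^*_{-i}}(s_1)-\uV_{1,i}^{\pi^*}(s_1)]$.

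Next, I would insert the true value $V_{1,i}^{\pi^*}(s_1)=V_{1,i}^{\dagger,\pi^*_{-i}}(s_1)$ in the middle and split each summand as
\[
\bigl(\oV_{1,i}^{\dagger,\pi^*_{-i}}(s_1)-V_{1,i}^{\dagger,\pi^*_{-i}}(s_1)\bigr)+\bigl(V_{1,i}^{\pi^*}(s_1)-\uV_{1,i}^{\pi^*}(s_1)\bigr).
\]
Both differences are now controlled by Lemma~\ref{lemma:value bound}: the first by $\E_{\pi_i^\dagger,\pi^*_{-i}}[\sum_h 2C_\P\Norm{\psi}_{[\Sigma^\P_{\D,h}+\lambda I]^{-1}}+2C_r\Norm{\overline{\psi}_h}_{[\Sigma^r_\D+\lambda I]^{-1}}]$ and the second by the same quantity under $\pi^*$. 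Since $\pi^*$ is itself a unilateral deviation from $\pi^*$, both expectations are dominated by $\max_{\pi_i}\E_{\pi_i,\pi^*_{-i}}[\cdots]$; summing the two produces the factor of $4$. Finally, I would substitute the concentration constants: $C_r=\widetilde{O}(\sqrt{dH})$ directly from Lemma~\ref{lemma:reward concentration} as already stated, and $C_\P=\widetilde{O}(dH)$ from a standard self-normalized concentration argument for the linear transition estimator used in Algorithm~\ref{algo:value estimation} and Algorithm~\ref{algo:best response estimation} (this bound is implicitly required for Lemma~\ref{lemma:value bound} to hold with high probability).

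The main obstacle is ensuring all high-probability events align: the reward MLE concentration (giving $C_r$) and the transition regression concentration (giving $C_\P$) must both hold, so I would apply a union bound at confidence $1-\delta$, which only affects the logarithmic factors hidden in $\widetilde O(\cdot)$. A minor subtlety is the telescoping in Lemma~\ref{lemma:value bound}: the optimistic best-response side uses the $\arg\max$ policy $\pi_i^\dagger$ at each step, so the expectation on that side is with respect to $(\pi_i^\dagger,\pi^*_{-i})$ rather than $\pi^*$, which is precisely a unilateral deviation and therefore covered by the unilateral coverage hypothesis. With these pieces in place, the stated inequality follows after collecting the constants.
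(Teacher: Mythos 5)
Your proposal is correct and takes essentially the same route as the paper's proof: bound the Nash gap via Lemma~\ref{lemma:gap bound} (which absorbs the surrogate-minimization/argmin step), insert the Nash identity $V_{1,i}^{\dagger,\pi^*_{-i}}(s_1)=V_{1,i}^{\pi^*}(s_1)$ to split each summand into an optimistic best-response error and a pessimistic evaluation error, control both with Lemma~\ref{lemma:value bound}, dominate the two expectations (under $(\pi_i^\dagger,\pi^*_{-i})$ and under $\pi^*$ itself, the trivial unilateral deviation) by $\max_{\pi_i}\E_{\pi_i,\pi^*_{-i}}[\cdot]$ to collect the factor of $4$, and plug in $C_r$ from Lemma~\ref{lemma:reward concentration} and $C_{\P}=\widetilde{O}(dH)$ from Lemma~\ref{lemma:opt/pes}. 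Your explicit union bound over the reward and transition concentration events is a detail the paper leaves implicit but is the right bookkeeping.
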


\begin{proof}
    By Lemma \ref{lemma:value bound} and Lemma \ref{lemma:gap bound}, we have
    \begin{align*}
        &\text{Nash-gap}(\pi^\mathrm{output})\\
        \leq& \sum_{i\in[m]}\Mp{\overline{V}_{1,i}^{\dagger,\pi^*_{-i}}(s_1)-\underline{V}_{1,i}^{\pi^*}(s)}\\
        \leq& \sum_{i\in[m]}\E_{\pi}\Mp{2C_{\P}\sum_{h=1}^H\Norm{\psi(s_h,\a_h)}_{[\Sigma_{\D,h}^{\P}+\lambda I]^{-1}}+2C_{r}\sum_{h=1}^H\Norm{\overline{\psi}_h(s_h,\a_h)}_{[\Sigma_{\D}^{r}+\lambda I]^{-1}}}\\
        &+\sum_{i\in[m]}\E_{\pi_i^{\dagger},\pi_{-i}}\Mp{\sum_{h=1}^H2C_{\P}\Norm{\psi(s_h,\a_h)}_{[\Sigma_{\D}^{\P}+\lambda I]^{-1}}+2C_{r}\sum_{h=1}^H\Norm{\overline{\psi}_h(s_h,\a_h)}_{[\Sigma_{\D}^{r}+\lambda I]^{-1}}}\\
        &+\sum_{i\in[m]}\Mp{V_{1,i}^{\dagger,\pi^*_{-i}}(s_1)-V_{1,i}^{\pi^*}(s_1)}\\
        \leq& \max_{\pi_i}4\E_{\pi_i,\pi^*_{-i}}\Mp{\sum_{h=1}^HC_{\P}\Norm{\psi(s_h,\a_h)}_{[\Sigma_{\D,h}^{\P}+\lambda I]^{-1}}+C_{r}\sum_{h=1}^H\Norm{\overline{\psi}_h(s_h,\a_h)}_{[\Sigma_{\D}^{r}+\lambda I]^{-1}}},
    \end{align*}
    where we leverage the fact that $V_{1,i}^{\dagger,\pi^*_{-i}}(s_1)-V_{1,i}^{\pi^*}(s_1)$ for Nash equilibrium $\pi^*$.
\end{proof}

Intuitively, the proof consists of two main phases: 1) we first reduce the MARL problem to the MARLHF problem, as preference signals can be sampled given the real rewards; 2) we then observe that in MARL problems, a Nash equilibrium is only identifiable when all adjacent actions are represented in the dataset.
This observation establishes the necessity of unilateral coverage. The sufficiency of unilateral coverage in MARLHF (Theorems \ref{thm: Unilateral Policy Coverage}, \ref{thm:unilateral}) is then derived from its sufficiency in MARL and the reduction from MARL to MARLHF.

\subsection{Auxiliary Lemmas}

\begin{lemma}\label{lemma:reward concentration} (Lemma 3.1 in \citep{zhu2023principled})
    With probability at least $1-\delta$, we have
    $$\Norm{\widehat{\theta}-\theta}_{\Sigma_{\D}^r+\lambda I}\leq C\sqrt{\frac{d+\log(1/\delta)}{\lambda^2}+\lambda B^2}.$$
\end{lemma}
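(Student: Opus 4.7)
The plan is to follow the standard $M$-estimator analysis for logistic-type generalised linear models, specialised to the Bradley--Terry--Luce link. First I would write $x_n := \psi(\tau_n)-\psi(\tau'_n)$ and let $\ell(\theta)$ denote the negative log-likelihood, whose Hessian is $\nabla^2\ell(\theta)=\sum_{n=1}^N\sigma'(\inner{x_n,\theta})\,x_n x_n^\top$. On the feasible set $\Norm{\theta}\le B$, Cauchy--Schwarz gives $|\inner{x_n,\theta}|\le 2B$, so the sigmoid derivative is uniformly bounded below by $\gamma:=1/(2+e^{2B}+e^{-2B})$. This yields a uniform curvature bound $\nabla^2\ell(\theta)\succeq\gamma\bigl(\Sigma_{\D}^r-\lambda I\bigr)$ on the feasible set, i.e.\ $\ell$ is $\gamma$-strongly convex in the $\Sigma_{\D}^r$-seminorm.

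Next I would control $\nabla\ell$ at the true parameter. Because $y_n$ is Bernoulli with mean $\sigma(\inner{x_n,\theta})$, the gradient $\nabla\ell(\theta)=-\sum_n\bigl(y_n-\sigma(\inner{x_n,\theta})\bigr)x_n$ is a sum of independent, mean-zero, bounded vectors with $\Norm{x_n}\le 2$. A vector Bernstein inequality, or a self-normalised martingale bound in the spirit of Abbasi-Yadkori--P\'al--Szepesv\'ari, then gives with probability at least $1-\delta$,
$$\Norm{\nabla\ell(\theta)}_{(\Sigma_{\D}^r+\lambda I)^{-1}}^{2}\;\le\;C_1\bigl(d+\log(1/\delta)\bigr).$$
To convert this gradient bound into a parameter bound I would invoke the optimality $\ell(\widehat{\theta})\le\ell(\theta)$ of the constrained MLE together with strong convexity,
$$0\;\ge\;\ell(\widehat{\theta})-\ell(\theta)\;\ge\;\inner{\nabla\ell(\theta),\widehat{\theta}-\theta}+\tfrac{\gamma}{2}\Norm{\widehat{\theta}-\theta}_{\Sigma_{\D}^r}^{2}.$$
Rearranging, applying Cauchy--Schwarz to the inner product against $\widehat{\theta}-\theta$ in the paired norms $\Norm{\cdot}_{(\Sigma_{\D}^r+\lambda I)^{-1}}$ and $\Norm{\cdot}_{\Sigma_{\D}^r+\lambda I}$, and using the trivial bound $\Norm{\widehat{\theta}-\theta}^{2}\le 4B^{2}$ to absorb the $\lambda I$ component into an additive $\lambda B^{2}$ term when passing from the $\Sigma_{\D}^r$-norm to the $\Sigma_{\D}^r+\lambda I$-norm, a quadratic rearrangement then reproduces the stated bound $C\sqrt{(d+\log(1/\delta))/\gamma^{2}+\lambda B^{2}}$, under the identification of the ``$\lambda$'' in the denominator of the statement with the curvature constant $\gamma$.

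I expect the gradient concentration to be the principal obstacle. The summands $(y_n-\sigma(\inner{x_n,\theta}))x_n$ carry a data-dependent conditional covariance $\sum_n\sigma'(\inner{x_n,\theta})\,x_n x_n^\top\preceq\Sigma_{\D}^r$, so obtaining the $(\Sigma_{\D}^r+\lambda I)^{-1}$-norm bound cleanly---and ensuring that only the feature dimension $d$ (rather than the ambient dimension) appears---requires either a dimension-free self-normalised martingale inequality or a careful $\varepsilon$-net covering of the unit ball in the $(\Sigma_{\D}^r+\lambda I)$-metric, together with a union bound. The remaining strong convexity, Cauchy--Schwarz, and norm-conversion steps are standard once this concentration is in hand.
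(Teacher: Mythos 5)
Your proposal is correct and takes essentially the same route as the paper, which does not prove this lemma itself but imports it as Lemma 3.1 of \citep{zhu2023principled}: that proof is precisely your combination of uniform strong convexity of the negative log-likelihood (with curvature $\gamma=1/(2+e^{2B}+e^{-2B})$ from the bounded sigmoid derivative), a Bernstein-type bound $\Norm{\nabla\ell(\theta)}^2_{(\Sigma_{\D}^r+\lambda I)^{-1}}\lesssim d+\log(1/\delta)$ at the true parameter, Cauchy--Schwarz in the paired norms, absorption of the $\lambda I$ component via $\Norm{\widehat{\theta}-\theta}^2\leq 4B^2$, and the final quadratic rearrangement. You also correctly flag the transcription issue in the statement: the $\lambda^2$ in the denominator should be the curvature constant $\gamma^2$ (together with the sample-size factor $n$, which appears in the confidence region $C_r$ in the body of the paper but is dropped in this restatement).
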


\begin{lemma}\label{lemma:opt/pes}
    (Lemma A.1 in \citep{zhong2022pessimistic}) With probability at least $1-\delta$, we have
    $$0\leq \B_h\underline{V}_{h+1,i}(\cdot,\cdot)-\underline{Q}_{h,i}(\cdot,\cdot)\leq 2C_{\P}\Norm{\psi(\cdot,\cdot)}_{[\Sigma_{\D,h}^{\P}+\lambda I]^{-1}},$$
    $$0\geq \B_h\overline{V}_{h+1,i}(\cdot,\cdot)-\overline{Q}_{h,i}(\cdot,\cdot)\geq -2C_{\P}\Norm{\psi(\cdot,\cdot)}_{[\Sigma_{\D,h}^{\P}+\lambda I]^{-1}},$$
    where $C_{\P}=CdH\sqrt{\log(2dNH/\delta)}$.
\end{lemma}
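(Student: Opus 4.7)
The plan is to identify $\underline{Q}_{h,i}$ and $\overline{Q}_{h,i}$ as the lower and upper confidence bounds, respectively, of a ridge-regression estimate of $\B_h\underline{V}_{h+1,i}$ and $\B_h\overline{V}_{h+1,i}$, and then to read the sandwich off the width of these confidence intervals. The first observation is that under the linear Markov game assumption, for any bounded $V:\S\to[0,H]$,
\begin{equation*}
    [\B_h V](s,\a) = r_h(s,\a) + \E_{s'\sim P_h(\cdot\mid s,\a)}V(s') = \inner{\psi(s,\a),\, w^*_{h,V}},\quad w^*_{h,V} := \theta_h + \int V(s')\,\mu_h(s')\,ds',
\end{equation*}
with $\Norm{w^*_{h,V}}\leq 2H\sqrt{d}$. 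In particular $\B_h\underline{V}_{h+1,i}(s,\a)=\inner{\psi(s,\a),\,w^*_{h,\underline{V}_{h+1,i}}}$, so the task is to bound $|\inner{\psi(s,\a),\, w_{h,i}-w^*_{h,\underline{V}_{h+1,i}}}|$ uniformly in $(s,\a)$.

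Writing $\psi_n:=\psi(s_h^n,\a_h^n)$, the estimator in Algorithm~\ref{algo:value estimation} yields
\begin{equation*}
    w_{h,i}-w^*_{h,\underline{V}_{h+1,i}} = [\Sigma_{\D,h}^\P]^{-1}\Bp{\sum_n \psi_n\eta_n + \sum_n \psi_n\bigl(\underline{r}_{h,i}-r_{h,i}\bigr)(s_h^n,\a_h^n) - \lambda\,w^*_{h,\underline{V}_{h+1,i}}},
\end{equation*}
where $\eta_n:=\underline{V}_{h+1,i}(s_{h+1}^n)-\E_{s'\sim P_h(\cdot\mid s_h^n,\a_h^n)}\underline{V}_{h+1,i}(s')$ is a zero-mean martingale difference bounded by $H$. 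Cauchy--Schwarz in the $\Sigma_{\D,h}^\P+\lambda I$ Mahalanobis geometry splits the target into $\Norm{\psi}_{[\Sigma_{\D,h}^\P+\lambda I]^{-1}}$ times $\Norm{w_{h,i}-w^*}_{\Sigma_{\D,h}^\P+\lambda I}$. Bounding the second factor is a three-part exercise: the Abbasi-Yadkori--P\'al--Szepesv\'ari self-normalized vector-martingale inequality controls $\sum_n\psi_n\eta_n$ by $O(H\sqrt{d\log(NH/\delta)})$ for a \emph{fixed} $V$; the reward-bias term is absorbed through Lemma~\ref{lemma:reward bound}; the regularization contributes $\sqrt{\lambda d}\,H$.

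The main obstacle, and the step that requires real care, is that $\underline{V}_{h+1,i}$ is itself a data-dependent random function produced recursively by Algorithm~\ref{algo:value estimation}, so the self-normalized inequality for a fixed $V$ does not apply directly. I would handle this with an $\epsilon$-net argument over the parametric class of pessimistic value functions, namely $\V_{h+1} = \{s\mapsto \E_{\a\sim\pi(s)}\max\{\inner{\psi(s,\a),w}-\beta[\psi^\top\Lambda^{-1}\psi]^{1/2},0\}\}$ indexed by $(w,\beta,\Lambda)$ with bounded norms, combined with the Lipschitz continuity of $w^*_{h,V}$ in $V$. The log-covering number of this class scales like $d^2\log(NH/\lambda\epsilon)$, and a union bound over the cover inflates the confidence width by an extra $\sqrt{d}$, producing exactly the advertised $C_\P=CdH\sqrt{\log(2dNH/\delta)}$.

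With this uniform confidence-width statement the two inequalities are immediate. By construction $\underline{Q}_{h,i}(s,\a)=\max\{\inner{\psi,w_{h,i}}-C_\P[\psi^\top[\Sigma_{\D,h}^\P]^{-1}\psi]^{1/2},\,0\}$; on the high-probability event the unclipped quantity lies below $\inner{\psi,w^*_{h,\underline{V}_{h+1,i}}}=\B_h\underline{V}_{h+1,i}$, and the clip at $0$ is harmless because $\B_h\underline{V}_{h+1,i}\geq 0$, giving the lower inequality. Conversely, $\B_h\underline{V}_{h+1,i}-\underline{Q}_{h,i}\leq \inner{\psi,\,w^*-w_{h,i}}+C_\P\Norm{\psi}_{[\Sigma_{\D,h}^\P]^{-1}}\leq 2C_\P\Norm{\psi}_{[\Sigma_{\D,h}^\P+\lambda I]^{-1}}$, since the confidence width dominates both the estimation error and the bonus. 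The optimistic inequalities follow by the mirror-image argument using $\overline{r}$, $\overline{V}_{h+1,i}$, and the min-clip at $H$.
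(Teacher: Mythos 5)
The paper never actually proves this lemma: it is imported wholesale as Lemma A.1 of \citep{zhong2022pessimistic}, and the route you reconstruct --- linear representation of the Bellman backup with $w^*_{h,V}=\theta_h+\int V\,d\mu_h$, the ridge-regression error decomposition into martingale noise plus regularization, the self-normalized concentration inequality for a fixed $V$, and an $\epsilon$-net over the bonus-perturbed value class whose $d^2\log(\cdot)$ metric entropy inflates the width to $C_{\P}=CdH\sqrt{\log(2dNH/\delta)}$ --- is exactly the argument behind the cited lemma. Your handling of the clipping (the max at $0$ is harmless because $\B_h\underline{V}_{h+1,i}\geq 0$, the min at $H$ on the optimistic side likewise) and the mirror argument for $\overline{Q}$ are also correct. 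Two cosmetic points: the paper's $\Sigma_{\D,h}^{\P}$ already contains $\lambda I$, so the ``$\Sigma_{\D,h}^{\P}+\lambda I$'' in the statement double-counts the regularizer, and your covering class should also include the best-response form $\max_{a_i}\E_{\a_{-i}\sim\pi_{-i}}[\cdot]$ used in Algorithm~\ref{algo:best response estimation}; both are routine.

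The one step that does not survive scrutiny is the claim that the reward-bias term $\sum_n\psi_n(\underline{r}_{h,i}-r_{h,i})(s_h^n,\a_h^n)$ is ``absorbed through Lemma~\ref{lemma:reward bound}'' into the advertised width $2C_{\P}\Norm{\psi}_{[\Sigma_{\D,h}^{\P}]^{-1}}$. Lemma~\ref{lemma:reward bound} bounds each bias by $2C_r\Norm{\overline{\psi}_h(s_h^n,\a_h^n)}_{[\Sigma_{\D}^{r}+\lambda I]^{-1}}$, but the resulting contribution to $\Norm{w_{h,i}-w^*}_{\Sigma_{\D,h}^{\P}}$ is of order $C_r\bigl(\sum_n\Norm{\overline{\psi}_h(s_h^n,\a_h^n)}^2_{[\Sigma_{\D}^{r}+\lambda I]^{-1}}\bigr)^{1/2}$, and since $\Sigma_{\D}^{r}$ is built from trajectory-\emph{difference} features $\psi(\tau)-\psi(\tau')$ it need not cover the per-step features at all: if paired trajectories coincide, $\Sigma_{\D}^{r}=\lambda I$ while this sum scales like $\sqrt{N/\lambda}$, which grows with the dataset and cannot be folded into a $C_{\P}$ that is independent of reward coverage. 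The correct account is to prove the lemma for the transition backup with $\underline{r}$ (respectively $\overline{r}$) treated as the regression target --- the known-reward setting of \citep{zhong2022pessimistic}, where the statement holds verbatim --- and to carry the gap between $r$ and $\underline{r}$ as a separate additive term $2C_r\Norm{\overline{\psi}_h(\cdot,\cdot)}_{[\Sigma_{\D}^{r}+\lambda I]^{-1}}$; this is precisely how the paper deploys the lemma in the proof of Lemma~\ref{lemma:value bound}, where the per-step error is $2C_{\P}\Norm{\psi}_{[\Sigma_{\D,h}^{\P}+\lambda I]^{-1}}+2C_r\Norm{\overline{\psi}_h}_{[\Sigma_{\D}^{r}+\lambda I]^{-1}}$. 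With that repair --- which, to be fair, corrects a looseness in the paper's own restatement of the lemma, not just in your sketch --- your argument goes through.
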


\section{Experiment Details}
\label{apd:experiment details}

\begin{algorithm}[t]
    \caption{Pipeline of Preference-Based Multi-agent Reinforcement Learning}
    \label{alg:experiment}
    \begin{algorithmic}[1]
        \State \textbf{Input:} Dataset $\mathcal{D}=\{\tau_{i}, \tau_{i}^{\prime}, \mathbf{y}_i\}_{i=1}^N$. 
        \State Train a agent-wise reward model $r_\phi$;
        \State Train a imitation model $\pi_{b}$;
        \State Apply MARL algorithm to learn $\pi_\mathbf{w}$ 
        with distribution-based pessimism from $\pi_{b}$;
        \State \Return $\pi_\mathbf{w}$.
    \end{algorithmic}
\end{algorithm}

\subsection{Implementation Details}

\paragraph{Pipeline}
Our pipeline for Preference-Based Multi-Agent Reinforcement Learning (PbMARL), detailed in Algorithm~\ref{alg:experiment}, outlines the key steps for training agents using preference datasets. The process begins with training an agent-specific reward model \(r_\phi\), followed by learning a imitation model \(\pi_b\). The final policy \(\pi_\mathbf{w}\) is then optimized using a MARL algorithm with distribution-based pessimism derived from \(\pi_b\). 

\paragraph{Model Configurations}
The models used in our experiments are designed to effectively handle the complexities of multi-agent environments. 
Our \textbf{reward model} $r_\phi$ is a fully connected neural network, featuring action and observation embedding layers followed by hidden layers.
The \textbf{MADPO agent network} uses RNN to output its Q-values, enabling the agent to make informed decisions based on its observations and learned policies. 
MABCQ and MAIQL, as tested in Section ~\ref{sec: Experiments}, are modified versions of BCQ \cite{fujimoto2019offpolicy} and IQL \cite{kostrikov2021offlinereinforcementlearningimplicit} tailored for MARL. Similar to VDN, the Q-functions in MABCQ and MAIQL are designed to represent the cumulative rewards of all agents collectively. This adaptation ensures compatibility with the multi-agent setting while preserving the theoretical and practical foundations of the original algorithms.
In MAIQL, the coefficient for expectile regression, $\tau$, is consistently set to 0.95 across all experiments. For MABCQ, random noise addition is omitted due to the discrete action spaces in the tested environments, and the VAE is replaced with a policy generator trained via imitation learning.
Table \ref{tab: hyperparameters} lists the main hyperparameters used in our experiments, while other details can be checked in our codebase: \url{https://github.com/NataliaZhang/marlhf}.

\begin{table*}[ht!]
\centering
\begin{tabular}{lr}
\toprule
\textbf{Hyperparameter} & \textbf{Default Value} \\
\midrule
MSE Loss Coefficient $\alpha$ & 1 (MPE), 1e-3 (Overcooked)\\
Pessimism Coefficient $\beta$ & 1 (Spread, Reference), 10 (Tag, Overcooked)\\
Prediction Steepness & 5 \\
Episode length & 26 (MPE), 400 (Overcooked) \\
Reward Model Type & MLP (Spread, Reference, Overcooked), RNN (Tag) \\
RNN Hidden Size & 64 (Tag) \\
MLP Layer Dimension & 64 \\
Reward Model Layers & 4 \\
Reward Model Epochs & 100 \\
Reward Model Learning Rate & 1e-3 \\
Reward Model Batch Size & 256 \\
IL Learning Rate & 1e-3 \\
IL Epochs & 100 \\
IL Batch Size & 128 \\
Policy Model Learning Rate & 1e-3 \\
Policy Model Epochs & 1e4 (Spread, Reference), 1e6 (Tag), 1e5 (Overcooked) \\
Policy Model Batch Size & 128 \\
MAIQL Expectile Regression Coefficient & 0.95 \\
Optimizer & Adam \\
\bottomrule
\end{tabular}
\caption{\footnotesize{Main hyperparameters in experiments}}
\label{tab: hyperparameters}
\end{table*}

\paragraph{Dataset Configurations}
\label{apd: dataset}
As mentioned in Section \ref{subsec: datasets}, we collected trajectories in different environments using various policies to ensure a diverse dataset. 
Each dataset contains 38400 trajectories in each MPE environment and 960 trajectories in Overcooked.
The number of trajectory pairs is chosen as 10 times the number of trajectories.
Preference tags were then generated for these trajectory pairs in the mixed datasets. 
To adjust the randomness of the preferences, a steepness parameter was introduced as a scalar of the standardized reward. This configuration ensures a comprehensive dataset that can effectively support the evaluation of our methods.

\subsection{Tasks Descriptions}
\label{apd: Tasks Descriptions}
MPE is chosen for our experiments due to its versatility and well-established use as a benchmark for MARL algorithms. Among its variety of scenarios, the following three methods are chosen for our experiments:
\begin{itemize}
    \item Simple Spread
    \begin{itemize}
        \item Objective: Group of agents spread out. Each agent aims to occupy a unique landmark while avoiding collisions with other agents.
        \item Challenge: There is a potential conflict between the collision penalty and the spreading goal. Any biased policy would pushes the agents away from their targets, leading to suboptimal performance. Successfully balancing these objectives is critical to avoid negative learning outcomes.
    \end{itemize}
    \item Simple Tag
    \begin{itemize}
        \item Objective: Adversaries aim to catch the good agents cooperatively, while good agents aim to avoid being caught.
        \item Challenge: The adversaries only gets reward at the timestep of catching a good agent, so recovering the reward distribution across time becomes a challenging work. Note that the original environment is a 1v3 adversary game, and we convert it into a 3-agent cooperative game for better evaluation by fixing the good agent with a MAPPO pretrained policy. This environments requires high operation precision.
    \end{itemize}
    \item Simple Reference
    \begin{itemize}
        \item Objective: Agents aim to reach target landmarks that are known only to others by communication.
        \item Challenge: The requirement for communication increases the complexity of the action space and the dependency among cooperating agents. The performance of agents is affected particularly under unilateral policy conditions, where misaligned communication signals can significantly impact performance. 
    \end{itemize}
    \item Overcooked
    \begin{itemize}
        \item Objective: Two agents in a gridworld scoring points by repeatedly completing a three-step process: gathering ingredients, cooking, and serving dishes. Each step requires interacting with the environment at specific locations and orientations.
        \item Challenge: The episode length in this environment is notably long (400 timesteps), with sparse reward signals. This creates significant challenges for training the reward model, as it may incorrectly attribute rewards to specific behaviors, resulting in inaccurate training objectives and suboptimal learning outcomes.
    \end{itemize}
\end{itemize}

These tasks provide a robust framework for evaluating the effectiveness and adaptability of our offline MARLHF algorithm in various multi-agent settings. Additionally, they represents the common environments that are sensitive to dataset coverage, where dataset with unilateral policy can easily disrupt cooperation. Therefore, robust approaches are essential to ensure stable performance across different scenarios.

\subsection{Scalability Analysis}
\label{apd: scalability analysis}

To evaluate the scalability of our approach, we tested the performance of different methods as the number of agents increased. The experiments were conducted in the Spread-v3 environment, and the test returns per agent were recorded in Table \ref{tab:scale-spread}.

In our experiments, we observed that as the number of agents increases, convergence times lengthen and the complexity of the problem grows, mirroring the challenges typically encountered in traditional MARL settings. While our current approach manages this scaling without introducing new problems, it does not specifically address the inherent issues of instability and complexity that are well-documented in traditional MARL.

Further work may involve optimizing the algorithms to better handle larger-scale multi-agent environments or exploring alternative methods that maintain high performance even as the agent count increases.

\begin{table*}[h]
\centering
\begin{tabular}{lllll}
\toprule
 & 4 agents & 5 agents & 6 agents & 7 agents \\
\midrule
Mix-Unilateral & -31.13 \std{0.33} & -28.26 \std{0.43} & -26.92 \std{0.33}& -25.48 \std{0.13} \\
Pure-Expert & -31.71 \std{0.17} & -28.80 \std{0.10} & -27.16 \std{0.39} & -26.29 \std{0.32} \\
Trivial & -50.83 & -36.92 & -28.56 & -23.62 \\
\bottomrule
\end{tabular}
\caption{Test returns per agent in spread-v3 when agent scales. We ran 5 seeds for each dataset and kept all parameters at their default values ($\alpha=1, \beta=1$). Trivial represents test returns where all agents take a random policy, serving as a comparison. As the number of agents scales, the performance of the method generally decrease, and is eventually outperformed by the trivial policy when it reaches 7 agents.}
\label{tab:scale-spread}
\end{table*}

\subsection{Ablation Study Details}
\label{apd: ablation study}
\begin{figure}[ht!]
    \centering
    \includegraphics[width=.6\textwidth]{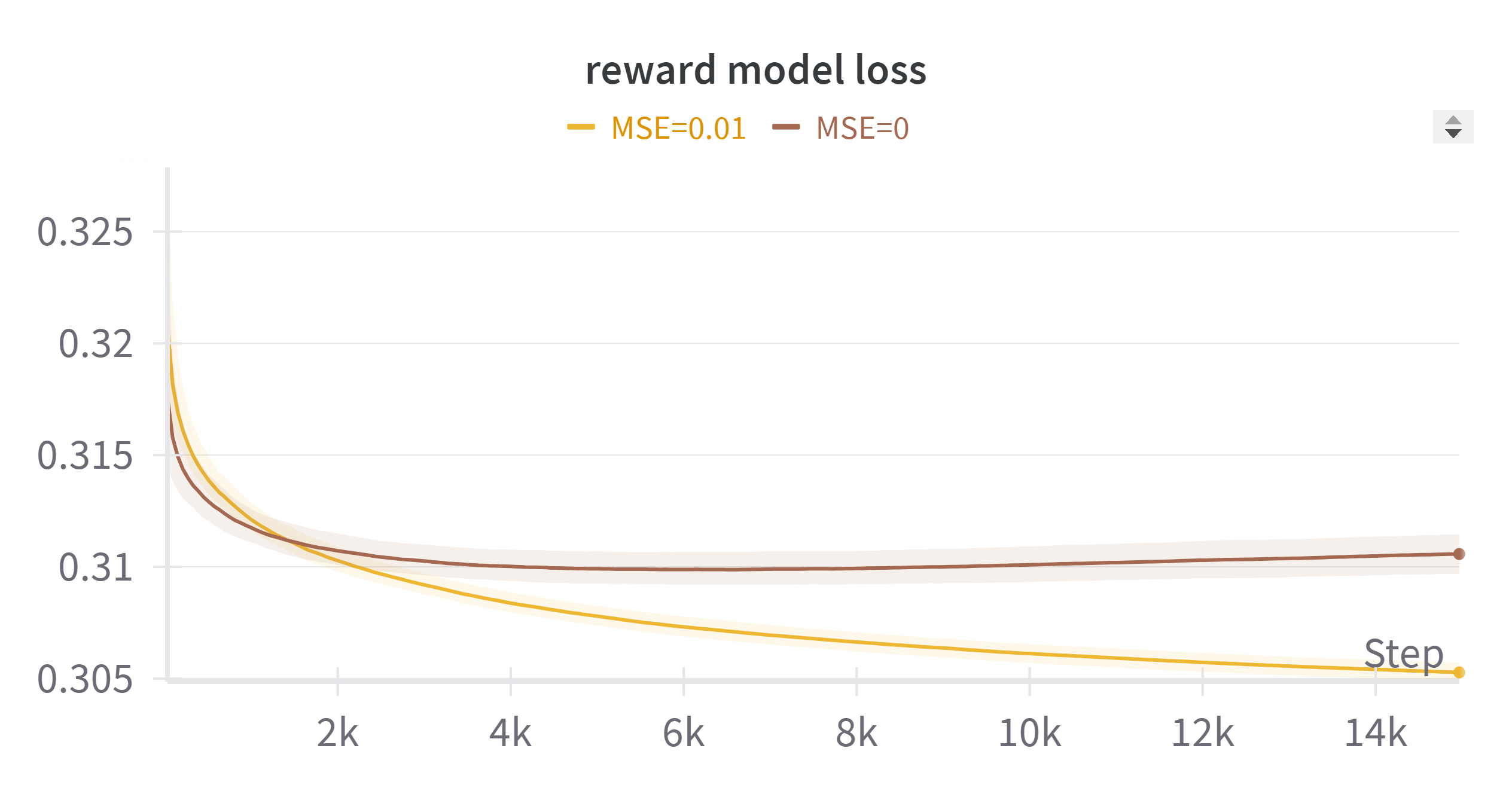}
    \caption{Reward model training curves on Spread-v3 Diversified dataset. Extra positive MSE regularization results in lower final training loss.}
    \label{fig:RM-mse}
\end{figure}

\begin{figure}[H]
    \centering
    \begin{subfigure}[b]{0.3\textwidth}
        \includegraphics[width=\textwidth]{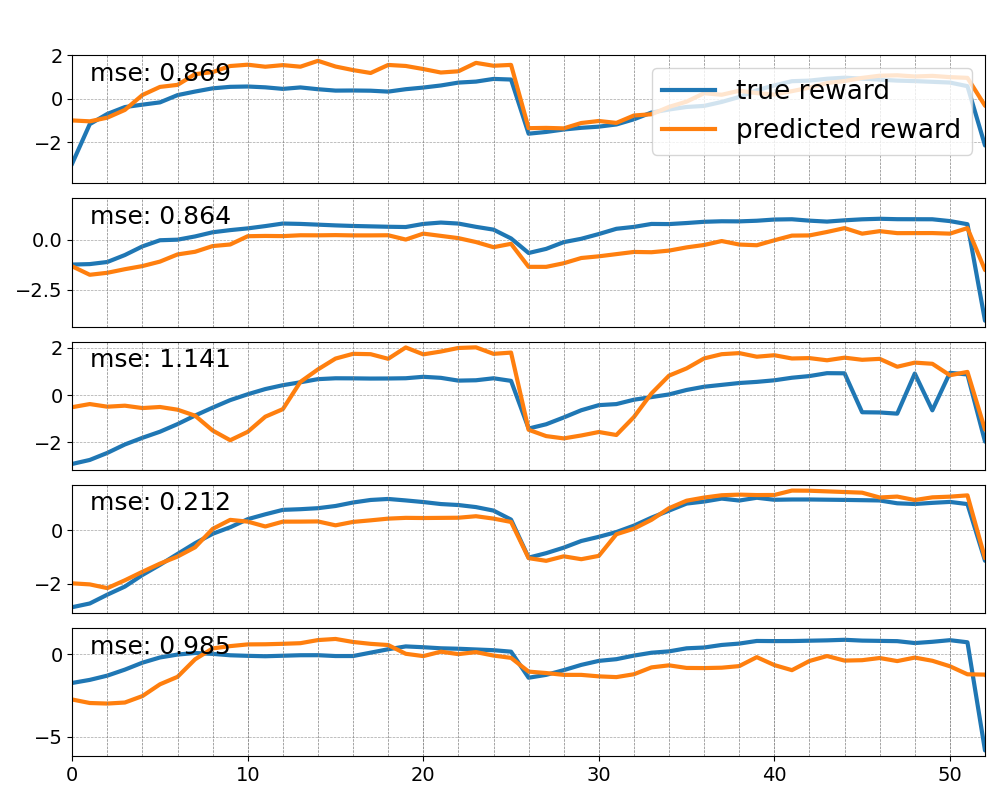}
        \caption{\parbox{0.9\linewidth}{\centering Pure-Expert \\ Spread}}
        \label{fig:rewardlines_spread_pure_001}
    \end{subfigure}
    \hfill
    \begin{subfigure}[b]{0.3\textwidth}
        \includegraphics[width=\textwidth]{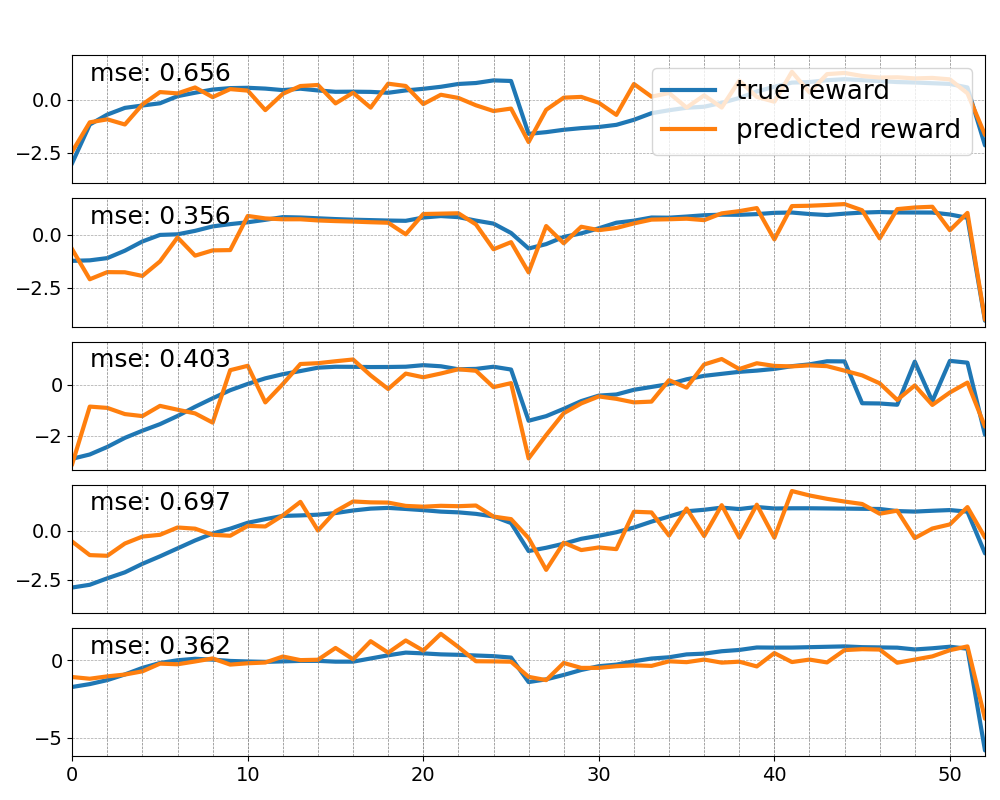}
        \caption{\parbox{0.9\linewidth}{\centering Diversified $\alpha=0$ \\ Spread}}
        \label{fig:rewardlines_spread_diverse_0}
    \end{subfigure}
    \hfill
    \begin{subfigure}[b]{0.3\textwidth}
        \includegraphics[width=\textwidth]{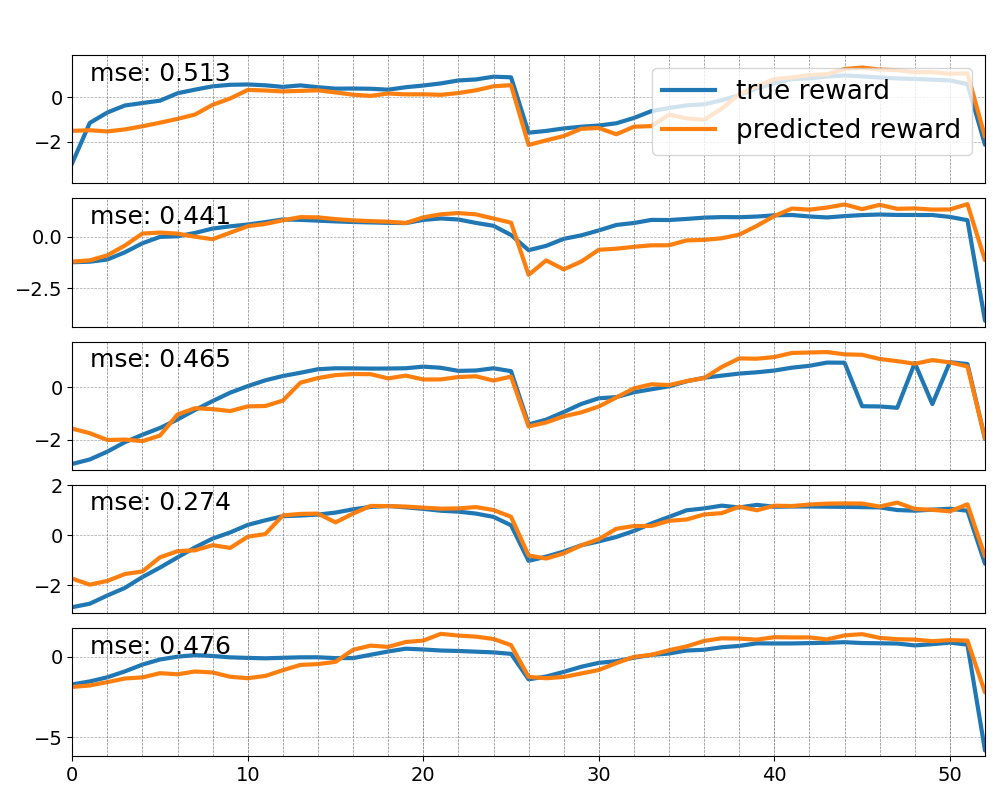}
        \caption{\parbox{0.9\linewidth}{\centering Diversified \\ Spread}}
        \label{fig:rewardlines_spread_diverse_001}
    \end{subfigure}
    \begin{subfigure}[b]{0.3\textwidth}
        \includegraphics[width=\textwidth]{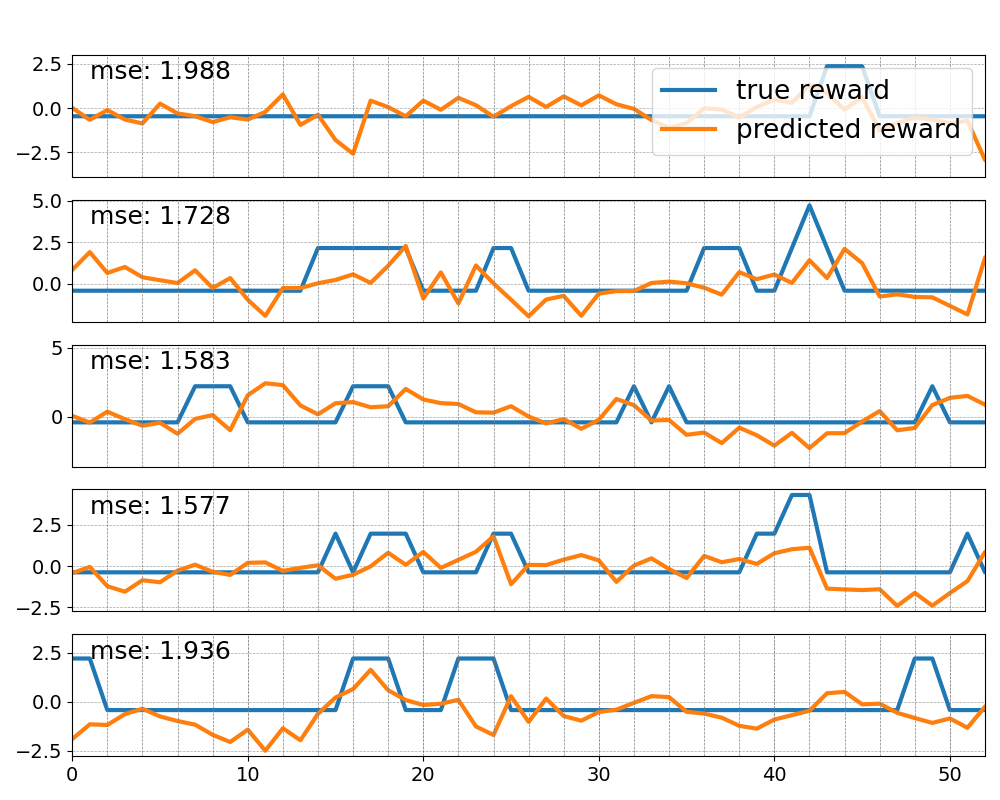}
        \caption{\parbox{0.9\linewidth}{\centering Pure-Expert \\ Tag}}
        \label{fig:rewardlines_tag_pure_0001}
    \end{subfigure}
    \hfill
    \begin{subfigure}[b]{0.3\textwidth}
        \includegraphics[width=\textwidth]{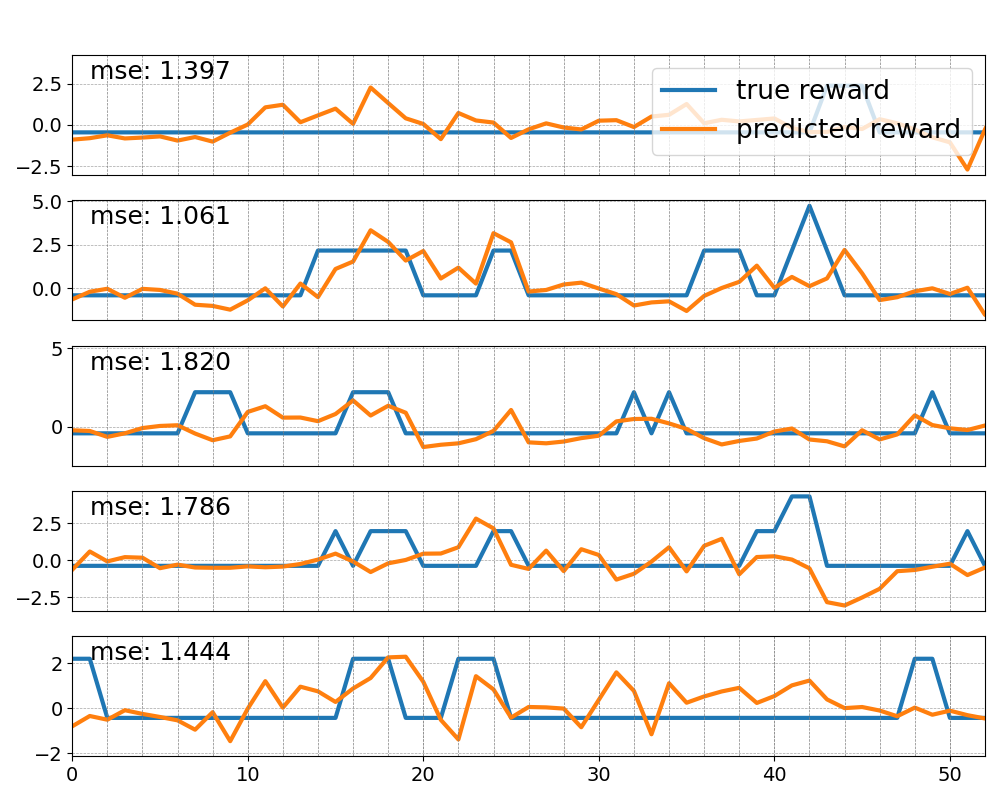}
        \caption{\parbox{0.9\linewidth}{\centering Diversified $\alpha=0$ \\ Tag}}
        \label{fig:rewardlines_tag_diverse_0}
    \end{subfigure}
    \hfill
    \begin{subfigure}[b]{0.3\textwidth}
        \includegraphics[width=\textwidth]{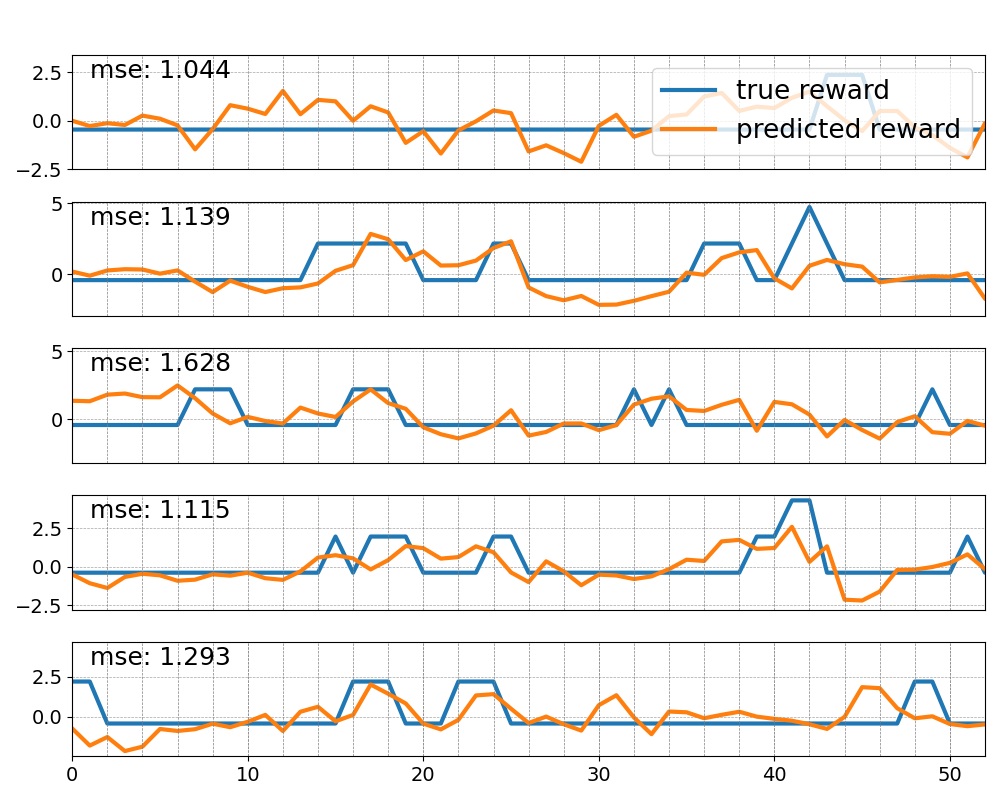}
        \caption{\parbox{0.9\linewidth}{\centering Diversified \\ Tag}}
        \label{fig:rewardlines_tag_diverse_0001}
    \end{subfigure}
    \begin{subfigure}[b]{0.3\textwidth}
        \includegraphics[width=\textwidth]{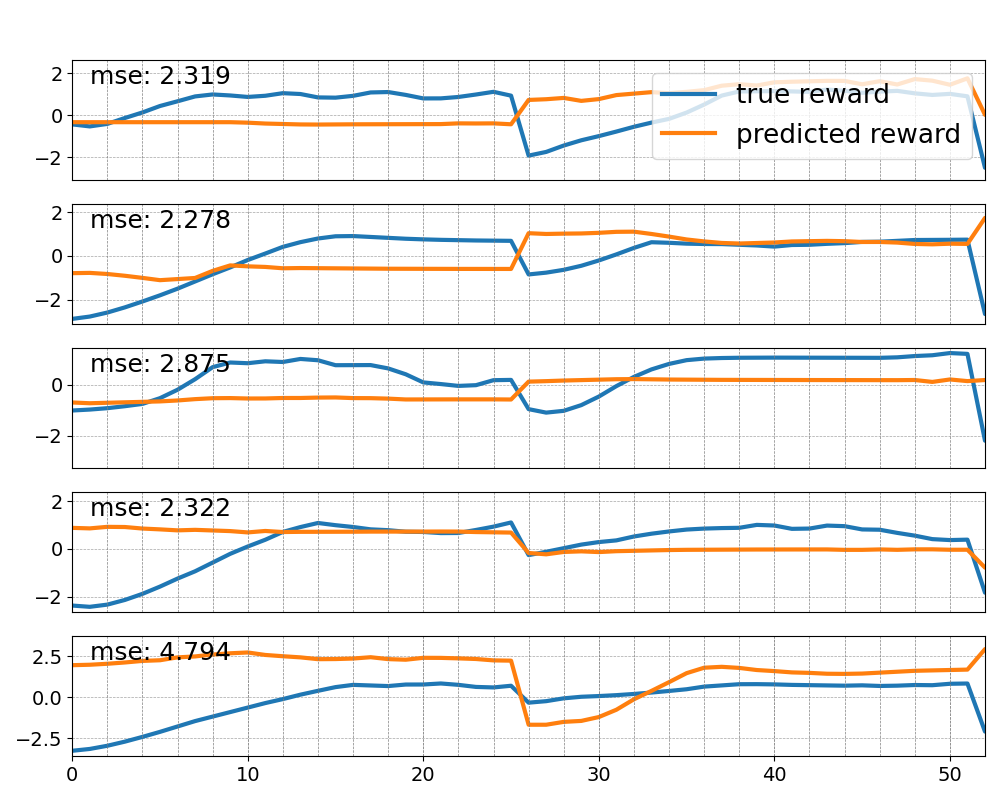}
        \caption{\parbox{0.9\linewidth}{\centering Pure-Expert \\ Reference}}
        \label{fig:rewardlines_reference_pure_1000}
    \end{subfigure}
    \hfill
    \begin{subfigure}[b]{0.3\textwidth}
        \includegraphics[width=\textwidth]{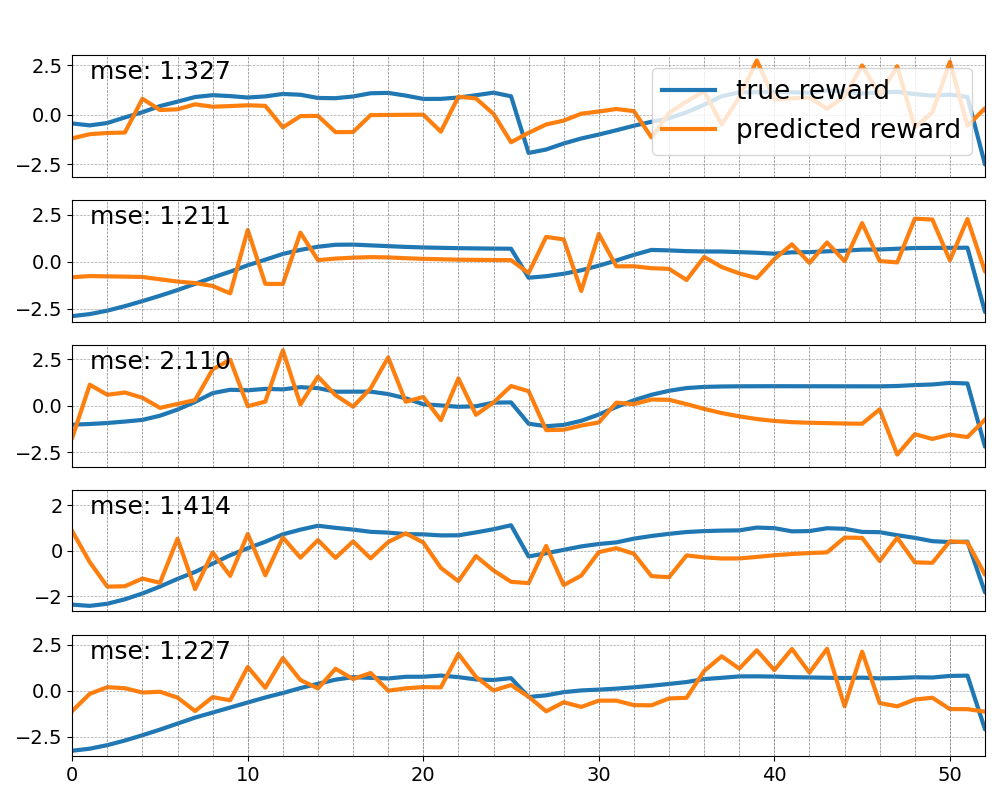}
        \caption{\parbox{0.9\linewidth}{\centering Diversified $\alpha=0$ \\ Reference}}
        \label{fig:rewardlines_reference_diverse_0}
    \end{subfigure}
    \hfill
    \begin{subfigure}[b]{0.3\textwidth}
        \includegraphics[width=\textwidth]{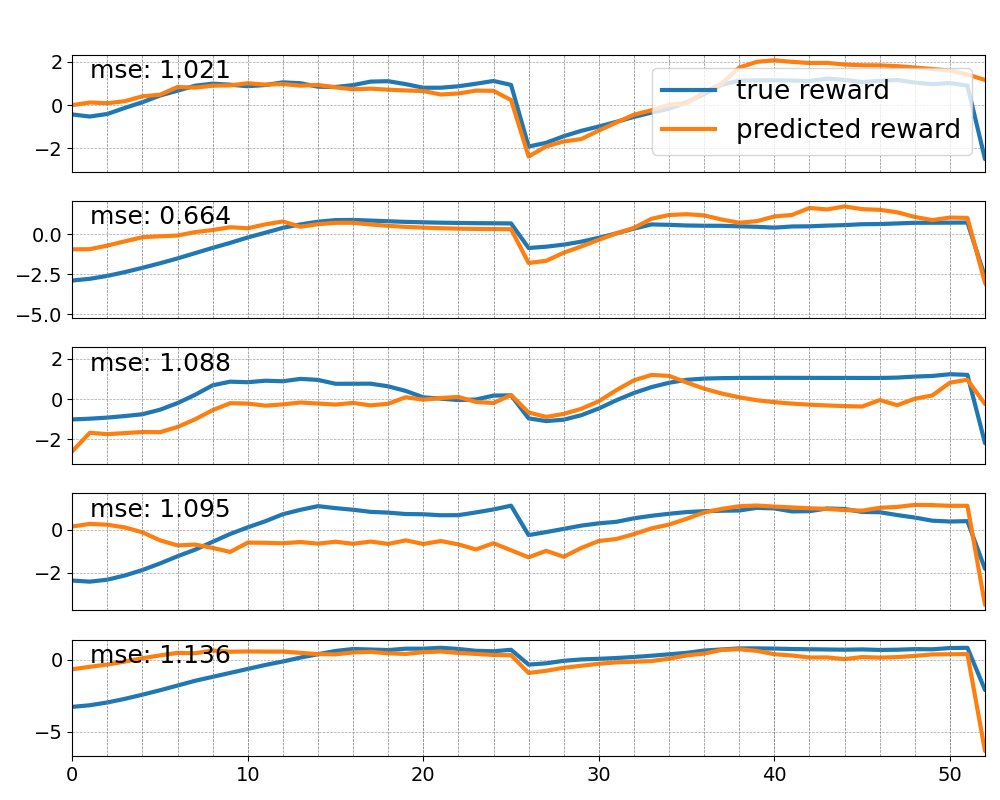}
        \caption{\parbox{0.9\linewidth}{\centering Diversified \\ Reference}}
        \label{fig:rewardlines_reference_diverse_1000}
    \end{subfigure}
    
    \caption{Predicted rewards and ground truth (both standardized) in all environments. Our method with diversified dataset and reward regularization gives predictions that approximate the ground truth the best. }
    \label{fig:rewardlines_appendix}
\end{figure}

\begin{table*}[ht!]
\centering
\begin{tabular}{lrrrrrrrrr}
\toprule
$\alpha$ & 0 & 0.001 & 0.01  & 0.1  & 1 & 10 & 100 & 1000 \\
\midrule
Spread-v3 & 0.350 & 0.345 & 0.347 & 0.351 & 0.361 & 0.389 & 0.460 & 0.603 \\
Tag-v3 & 0.465 & 0.431 & 0.440 & 0.455 & 0.484 &0.531 & 0.603 & 0.676 \\
Reference-v3 & 0.358 & 0.356 & 0.362 & 0.374 & 0.393 & 0.434 & 0.508 & 0.623 \\
\bottomrule
\end{tabular}
\caption{\footnotesize{NLL loss over diversified dataset. Appropriate regularization can assist the reward model in learning more effectively, leading to a reduction in NLL loss. Strengthening regularization (larger $\alpha$) sometimes leads to lower NLL loss, indicating better capability of capturing differences in reward signal.}}
\label{tab: mse-ablation}
\end{table*}

\begin{table*}[t]
\centering
\begin{tabular}{lrrrr}
\toprule
  & Spread-v3 & Tag-v3 & Reference-v3 & Overcooked\\
& MSE & MSE & MSE & MSE\\
\midrule
Diversified  & 0.434 & 1.46 & 1.19 & 2.04\\
Mix-Unilateral  & 0.647 & 1.52 & 1.09 & 1.98\\
Mix-Expert & 0.578 & 1.78 & 1.09 & 2.17 \\
Pure-Expert  & 0.673 & 1.48 & 2.33 & 1.72\\
\bottomrule
\end{tabular}
\caption{The mean squared error (MSE) between the standardized predicted rewards and the standardized ground truth rewards. 
\vspace{-0.5cm}
}
\label{tab:MSE}
\end{table*}

In this section, we explore the effects of specific components of our method, focusing on the influence of MSE regularization and the use of diversified datasets.
Our ablation studies collectively underscore the importance of MSE regularization and diversified datasets in enhancing the robustness and accuracy of the reward models within our framework.

The incorporation of MSE regularization plays an important role in improving model stability and convergence. As seen in Figure \ref{fig:RM-mse}, appropriate regularization leads to lower final training loss, suggesting a more stable learning process. 
Lower MSE between predicted reward and ground truth (significantly below 2) can indicate a strong correlation between the two (cf. ~\ref{tab:MSE}). In the MPE experiments, the reward model predictions align closely with the ground truth, and the optimization benefits from dataset diversity are particularly pronounced in the Reference-v3 and Spread-v3 scenarios. 
However, in more complex environments, the reward may exhibit more patterns, making MSE less effective as a metric for assessing reward model quality.
For example, in the Overcooked environment, assigning rewards for \textbf{cooking the dish} and \textbf{serving the dish} results in very similar returns, as a complete scoring cycle involves both actions, but these two reward function will have squared difference of 2.

Additionally, our evaluation of the Negative Log-Likelihood (NLL) loss over diversified datasets reveals that stronger regularization can sometimes lead to a reduction in NLL loss, as listed in Table \ref{tab: mse-ablation}. This implies that the model becomes more capable of capturing nuances in the reward signals as the regularization parameter, $\alpha$, is increased. However, it is also important to balance the strength of regularization, as overly strong regularization could potentially hinder the model’s flexibility in capturing complex reward structures.

The interplay between regularization strength and dataset diversity is critical for achieving optimal model performance in complex multi-agent settings.


\end{document}